\providecommand{\tabularnewline}{\\}
\newcommand{\lyxdot}{.}
\providecommand{\algorithmname}{Algorithm}
\theoremstyle{plain}
\newtheorem{thm}{\protect\theoremname}
  \theoremstyle{plain}
  \newtheorem{lem}[thm]{\protect\lemmaname}
  \theoremstyle{definition}
  \newtheorem{example}[thm]{\protect\examplename}
\titlespacing*\section{0pt}{4pt plus 2pt minus 2pt}{4pt plus 2pt minus 2pt}
\titlespacing*\subsection{0pt}{4pt plus 2pt minus 2pt}{4pt plus 2pt minus 2pt}
\titlespacing*\subsubsection{0pt}{4pt plus 2pt minus 2pt}{4pt plus 2pt minus 2pt}
  \providecommand{\examplename}{Example}
  \providecommand{\lemmaname}{Lemma}
\providecommand{\theoremname}{Theorem}
\begin{document}

\title{Lifted Tree-Reweighted Variational Inference%
\thanks{$\ \ $Main article appeared in UAI 2014. This version also includes
the supplementary material.%
}}

\author{Hung Hai Bui \\
Natural Language Understanding Lab\\
Nuance Communications\\
Sunnyvale, CA, USA\\
bui.h.hung@gmail.com\\
\And Tuyen N. Huynh\\
John von Neumann Institute\\
Vietnam National University\\
 Ho Chi Minh City\\
tuyen.huynh@jvn.edu.vn\\
\And David Sontag\\
Courant Institute of Mathematical Sciences\\
New York University\\
dsontag@cs.nyu.edu}

\maketitle
\global\long\def\Int{\mathbb{N}}

\global\long\def\Real{\mathbb{R}}

\global\long\def\idf#1{\mathbb{I}\left\{  #1\right\}  }

\global\long\def\symg{\mathbb{S}}

\global\long\def\symgn{\symg_{n}}

\global\long\def\symgparti#1{\symg^{c}(#1)}

\global\long\def\Gg{\mathbb{G}}

\global\long\def\intlist#1{\{1,2,\ldots,#1\}}

\global\long\def\subgroup{\le}

\global\long\def\idg{\mathbf{1}}

\global\long\def\aut{\pi}

\global\long\def\Aut{\mathbb{A}}

\global\long\def\toorbit{\rho}

\global\long\def\orbit{\mathrm{orb}}

\global\long\def\Orbit{\mathrm{Orb}}

\global\long\def\orbitfont#1{{\bf #1}}

\global\long\def\vorb{\orbitfont v}

\global\long\def\eorb{\orbitfont e}

\global\long\def\aorb{\orbitfont a}

\global\long\def\orbitof#1{\overline{#1}}

\global\long\def\stab{\mathrm{Stab}}

\global\long\def\cluster{c}

\global\long\def\Cluster{\mathcal{C}}

\global\long\def\Clustersize#1{\Cluster_{n}^{#1}}

\global\long\def\tup{\tau}

\global\long\def\Tup{\mathcal{T}}

\global\long\def\Tupsize#1{\Tup_{n}^{#1}}

\global\long\def\tupset{\mathrm{set}}

\global\long\def\permvec#1#2{#1_{#2(1)},\ldots,#1_{#2(n)}}

\global\long\def\bp{\mathbf{p}}

\global\long\def\feat{\phi}

\global\long\def\nrfeat{\mathrm{f}}

\global\long\def\Feat{\Phi}

\global\long\def\para{\theta}

\global\long\def\Para{\Theta}

\global\long\def\mean{\mu}

\global\long\def\bmean{\boldsymbol{\mu}}

\global\long\def\Mean{\mathcal{M}}

\global\long\def\outbound{\text{\text{OUT}ER}}

\global\long\def\Local{\text{LOCAL}}

\global\long\def\Cyc{\text{CYCLE}}

\global\long\def\parti{\Delta}

\global\long\def\Index{\mathcal{I}}

\global\long\def\vIndex{\mathcal{V}}

\global\long\def\Ee{\mathcal{E}}

\global\long\def\domain{\mathcal{X}}

\global\long\def\Expt{\mathbb{E}}

\global\long\def\meanmap{\mathbf{m}}

\global\long\def\shatter{\varphi}

\global\long\def\Pot{\Psi}

\global\long\def\pot{\psi}

\global\long\def\naut{\aut}

\global\long\def\paut{\gamma}

\global\long\def\ord{\lambda}

\global\long\def\argl{\mathrm{argl}}

\global\long\def\args{scope}

\global\long\def\argn{\eta}

\global\long\def\argnall#1{|\argn_{#1}|}

\global\long\def\fgraph{\mathcal{F}}

\global\long\def\narg{\kappa}

\global\long\def\cgraph{\mathcal{C}}

\global\long\def\M{\mathcal{M}}
\global\long\def\x{\mathbf{x}}
\global\long\def\E{\text{E}}
\global\long\def\S{\mathcal{S}}
\global\long\def\b{\mathbf{b}}
\global\long\def\R{\mathbb{R}}
\global\long\def\N{\mathbb{N}}

\global\long\def\lmean{\bar{\mean}}

\global\long\def\lparti{\bar{\parti}}

\global\long\def\expandmat{D}

\global\long\def\pAut{\Aut^{p}}

\global\long\def\simparti#1{\stackrel{#1}{\sim}}

\global\long\def\relint{\text{ri\,}}

\global\long\def\riMean{\relint\Mean}

\global\long\def\chull{\text{conv }}

\global\long\def\FF{\mathcal{F}}

\global\long\def\gr{\mathfrak{G}}

\global\long\def\gm{\mathcal{G}}

\global\long\def\gcol{\text{col}}

\global\long\def\formula{F}

\global\long\def\ldom{\mathcal{D}}

\global\long\def\lconsts{\ldom_{0}}

\global\long\def\ldomnovel{\ldom_{*}}

\global\long\def\Ground{\text{Gr}}

\global\long\def\GroundFormula{\text{GrF}}

\global\long\def\model{\omega}

\global\long\def\rename{r}

\global\long\def\onrfeat{\nrfeat^{o}}

\global\long\def\ofeatunary#1#2{\feat_{#1:#2}^{o}}

\global\long\def\ofeatbinary#1#2#3#4{\feat_{\left\{  #1:#3,#2:#4\right\}  }^{o}}

\global\long\def\oMean{\Mean^{o}}

\global\long\def\opara{\para^{o}}

\global\long\def\omean{\mean^{o}}

\global\long\def\oparaunary#1#2{\para_{#1:#2}^{o}}

\global\long\def\oparabinary#1#2#3#4{\para_{\left\{  #1:#3,#2:#4\right\}  }^{o}}

\global\long\def\omeanunary#1#2{\mean_{#1:#2}^{o}}

\global\long\def\omeanbinary#1#2#3#4{\mean_{\left\{  #1:#3,#2:#4\right\}  }^{o}}

\global\long\def\pmeanunary#1#2{\tau_{#1:#2}}

\global\long\def\pmeanbinary#1#2#3#4{\tau_{\left\{  #1:#3,#2:#4\right\}  }}

\global\long\def\liftpmeanunary#1#2{\bar{\tau}_{#1:#2}}

\global\long\def\liftpmeanbinary#1#2#3{\bar{\tau}_{#1:#2#3}}

\global\long\def\oFF{\FF^{o}}

\global\long\def\oIndex{\Index^{o}}

\global\long\def\pair#1#2{#1\text{:}#2}

\global\long\def\paracollection{\stackrel{\rightarrow}{\theta}}

\global\long\def\stp{\mathbb{T}}

\global\long\def\ea{\rho}

\global\long\def\ead{\stackrel{\rightarrow}{\rho}}

\global\long\def\arrow#1{\stackrel{\rightarrow}{#1}}

\global\long\def\mappair#1#2{#1\text{:}#2}

\global\long\def\pairignore#1{\mappair{#1}*}

\global\long\def\itpair{\mappair it}

\global\long\def\jhpair{\mappair jh}

\global\long\def\extorb{\mathcal{E}}

\global\long\def\Aapprox{B}

\global\long\def\negEntApprox{\Aapprox^{*}}

\global\long\def\bu{\mathbf{u}}

\global\long\def\be{\mathbf{e}}

\global\long\def\choose#1#2{\left(\stackrel{#1}{#2}\right)}

\global\long\def\spt{\mathbb{T}}

\global\long\def\symsub#1#2{#1_{[#2]}}

\begin{abstract}
We analyze variational inference for highly symmetric graphical models
such as those arising from first-order probabilistic models. We first
show that for these graphical models, the tree-reweighted variational
objective lends itself to a compact lifted formulation which can be
solved much more efficiently than the standard TRW formulation for
the ground graphical model. Compared to earlier work on lifted belief
propagation, our formulation leads to a convex optimization problem
for lifted marginal inference and provides an upper bound on the partition
function. We provide two approaches for improving the lifted TRW upper
bound. The first is a method for efficiently computing maximum spanning
trees in highly symmetric graphs, which can be used to optimize the
TRW edge appearance probabilities. The second is a method for tightening
the relaxation of the marginal polytope using lifted cycle inequalities
and novel exchangeable cluster consistency constraints.\end{abstract}

\section{Introduction}

Lifted probabilistic inference focuses on exploiting symmetries in
probabilistic models for efficient inference \cite{braz05lifted,bui12lifted,bui13uai,gogate11probabilistic,milch&al08,niepert12uai,singla08lifted}.
Work in this area has demonstrated the possibility to perform very
efficient inference in highly-connected, large tree-width, but \emph{symmetric}
models, such as those arising in the context of relational (first-order)
probabilistic models and exponential family random graphs \cite{robins2007introduction}.
These models also arise frequently in probabilistic programming languages,
an area of increasing importance as demonstrated by DARPA's PPAML
program (Probabilistic Programming for Advancing Machine Learning).

Even though lifted inference can sometimes offer order-of-magnitude
improvement in performance, approximation is still necessary. A topic
of particular interest is the interplay between lifted inference and
variational approximate inference. Lifted loopy belief propagation
(LBP) \cite{jaimovich07uai,singla08lifted} was one of the first attempts
at exploiting symmetry to speed up loopy belief propagation; subsequently,
counting belief propagation (CBP) \cite{kersting09counting} provided
additional insights into the nature of symmetry in BP. Nevertheless,
these work were largely procedural and specific to the choice of message-passing
algorithm (in this case, loopy BP). More recently, Bui et al., \cite{bui13uai}
proposed a general framework for lifting a broad class of convex variational
techniques by formalizing the notion of symmetry (defined via automorphism
groups) of graphical models and the corresponding variational optimization
problems themselves, independent of any specific methods or solvers. 

Our goal in this paper is to extend the lifted variational framework
in \cite{bui13uai} to address the important case of approximate marginal
inference. In particular, we show how to lift the tree-reweighted
(TRW) convex formulation of marginal inference \cite{wainwright03treebased}.
As far as we know, our work presents the first lifted \emph{convex}
variational marginal inference, with the following benefits over previous
work: (1) a lifted convex upper bound of the log-partition function,
(2) a new tightening of the relaxation of the lifted marginal polytope
exploiting exchangeability, and (3) a convergent inference algorithm.
We note that convex upper bounds of the log-partition function immediately
lead to concave lower bounds of the log-likelihood which can serve
as useful surrogate loss functions in learning and parameter estimation
\cite{ChenLearning,jaimovich07uai}.

To achieve the above goal, we first analyze the symmetry of the TRW
log-partition and entropy bounds. Since TRW bounds depend on the choice
of the edge appearance probabilities $\ea$, we prove that the quality
of the TRW bound is not affected if one only works with suitably symmetric
$\ea$. Working with symmetric $\ea$ gives rise to an explicit lifted
formulation of the TRW optimization problem that is equivalent but
much more compact. This convex objective function can be convergently
optimized via a Frank-Wolfe (conditional gradient) method where each
Frank-Wolfe iteration solves a lifted MAP inference problem. We then
discuss the optimization of the edge-appearance vector $\ea$, effectively
yielding a lifted algorithm for computing maximum spanning trees in
symmetric graphs. 

As in Bui et al.'s framework, our work can benefit from any tightening
of the local polytope such as the use of cycle inequalities \cite{barahonacut,sontag}.
In fact, each method for relaxing the marginal polytope immediately
yields a variant of our algorithm. Notably, in the case of exchangeable
random variables, radically sharper tightening (sometimes even exact
characterization of the lifted marginal polytope) can be obtained
via a set of simple and elegant linear constraints which we call \emph{exchangeable
polytope constraints}. We provide extensive simulation studies comparing
the behaviors of different variants of our algorithm with exact inference
(when available) and lifted LBP demonstrating the advantages of our
approach. The supplementary material \cite{bui2014liftedtrw} provides
additional proof and algorithm details.

\section{Background}

We begin by reviewing variational inference and the tree-reweighted
(TRW) approximation. We focus on inference in Markov random fields,
which are distributions in the exponential family given by $\Pr(x;\theta)=\exp\,\left\{ \left\langle \Phi(x),\theta\right\rangle -A(\theta)\right\} $,
where $A(\theta)$ is called the \emph{log-partition function} and
serves to normalize the distribution. We assume that the random variables
$x\in\domain^{n}$ are discrete-valued, and that the features $(\Feat_{i}),\ i\in\Index$
factor according to the graphical model structure $\gm$; $\Phi$
can be non-pairwise and is assumed to be overcomplete. This paper
focuses on the inference tasks of estimating the marginal probabilities
$p(x_{i})$ and approximating the log-partition function. Throughout
the paper, the domain $\domain$ is the binary domain $\{0,1\}$,
however, except for the construction of exchangeable polytope constraints
in Section \ref{sec:exch-constraints}, this restriction is not essential. 

Variational inference approaches view the log-partition function as
a convex optimization problem over the marginal polytope $A(\para)=\sup_{\mean\in\Mean(\gm)}\langle\mean,\theta\rangle-A^{*}(\mean)$
and seek tractable approximations of the negative entropy $A^{*}$
and the marginal polytope $\Mean$ \cite{wainwright2008graphical}.
Formally, $-A^{*}(\mu)$ is the entropy of the maximum entropy distribution
with moments $\mu$. Observe that $-A^{*}(\mu)$ is upper bounded
by the entropy of the maximum entropy distribution consistent with
any subset of the expected sufficient statistics $\mu$. To arrive
at the TRW approximation \cite{martinUpper}, one uses a subset given
by the pairwise moments of a spanning tree%
\footnote{If the original model contains non-pairwise potentials, they can be
represented as cliques in the graphical model, and the bound based
on spanning trees still holds.%
}. Hence for any distribution $\ea$ over spanning trees, an upper
bound on $-A^{*}$ is obtained by taking a convex combination of tree
entropies $-\negEntApprox(\tau,\ea)=\sum_{s\in V(G)}H(\tau_{s})-\sum_{e\in E(G)}I(\tau_{e})\ea_{e}$.
Since $\ea$ is a distribution over spanning trees, it must belong
to the spanning tree polytope $\stp(\gm)$ with $\ea_{e}$ denoting
the edge appearance probability of $e$. Combined with a relaxation
of the marginal polytope $\outbound\supset\Mean$, an upper bound
$B$ of the log-partition function is obtained: 
\begin{equation}
A(\para)\le\Aapprox(\para,\ea)=\sup_{\tau\in\outbound(\gm)}\left\langle \tau,\theta\right\rangle -\negEntApprox(\tau,\ea)\label{eq:trw-primal}
\end{equation}
We note that $\negEntApprox$ is linear w.r.t. $\ea$, and for $\ea\in\stp(G)$,
$\negEntApprox$ is convex w.r.t. $\tau$. On the other hand, $B$
is convex w.r.t. $\ea$ and $\para$.

The optimal solution $\tau^{*}(\ea,\para)$ of the optimization problem
(\ref{eq:trw-primal}) can be used as an approximation to the mean
parameter $\mean(\para)$. Typically, the local polytope $\Local$
given by pairwise consistency constraints is used as the relaxation
$\outbound$; in this paper, we also consider tightening of the local
polytope. 

Since (\ref{eq:trw-primal}) holds with any edge appearance $\ea$
in the spanning tree polytope $\stp$, the TRW bound can be further
improved by optimizing $\ea$
\begin{equation}
\inf_{\ea\in\mathbb{T}(G)}\Aapprox(\theta,\ea)\label{eq:optimize-edge-appearance}
\end{equation}
The resulting $\ea^{*}$ is then plugged into (\ref{eq:trw-primal})
to find the marginal approximation. In practice, one might choose
to work with some fixed choice of $\ea$, for example the uniform
distribution over all spanning trees. \cite{JancsaryM11} proposed
using the most uniform edge-weight $\arg\inf_{\ea\in\stp(G)}\sum_{e\in E}(\ea_{e}-\frac{|V|-1}{|E|})^{2}$
which can be found via conditional gradient where each direction-finding
step solves a maximum spanning tree problem.

Several algorithms have been proposed for optimizing the TRW objective
(\ref{eq:trw-primal}) given fixed edge appearance probabilities.
\cite{wainwright2008graphical} derived the tree-reweighted belief
propagation algorithm from the fixed point conditions. \cite{GJ07}
show how to solve the dual of the TRW objective, which is a geometric
program. Although this algorithm has the advantage of guaranteed convergence,
it is non-trivial to generalize this approach to use tighter relaxations
of the marginal polytope, which we show is essential for lifted inference.
\cite{JancsaryM11} use an explicit set of spanning trees and then
use dual decomposition to solve the optimization problem. However,
as we show in the next section, to maintain symmetry it is essential
that one \emph{not} work directly with spanning trees but rather use
symmetric edge appearance probabilities. \cite{sontag} optimize TRW
over the local and cycle polytopes using a Frank-Wolfe (conditional
gradient) method, where each iteration requires solving a linear program.
We follow this latter approach in our paper.

To optimize the edge appearance in (\ref{eq:optimize-edge-appearance}),
\cite{martinUpper} proposed using conditional gradient. They observed
that $\frac{\partial B(\para,\rho)}{\partial\ea_{e}}=-\frac{\partial B^{*}(\tau^{*},\ea)}{\partial\ea_{e}}=-I(\tau_{e}^{*})$
where $\tau^{*}$ is the solution of (\ref{eq:trw-primal}). The direction-finding
step in conditional gradient reduces to solving $\sup_{\ea\in\stp}\langle\ea,I\rangle$,
again equivalent to finding the maximum spanning tree with edge mutual
information $I(\tau_{e}^{*})$ as weights. We discuss the corresponding
lifted problem in section \ref{sec:lifted-mst}.

\section{Lifted Variational Framework}

We build on the key element of the lifted variational framework introduced
in \cite{bui13uai}. The automorphism group of a graphical model,
or more generally, an exponential family is defined as the group $\Aut$
of permutation pairs $(\naut,\paut)$ where $\naut$ permutes the
set of variables and $\paut$ permutes the set of features in such
a way that they preserve the feature function: $\Feat^{\paut^{-1}}(x^{\naut})=\Feat(x)$.
Note that this construction of $\Aut$ is entirely based on the structure
of the model and does not depend on the particular choice of the model
parameters; nevertheless the group stabilizes%
\footnote{Formally, $\Gg$ stabilizes $x$ if $x^{g}=x$ for all $g\in\Gg$.%
} (preserves) the key characteristics of the exponential family such
as the marginal polytope $\Mean$, the log-partition $A$ and entropy
$-A^{*}$. 

As shown in \cite{bui13uai} the automorphism group is particularly
useful for exploiting symmetries when parameters are tied. For a given
parameter-tying partition $\parti$ such that $\para_{i}=\para_{j}$
for $i,j$ in the same cell%
\footnote{If $\parti=\{\parti_{1}\ldots\parti_{K}\}$ is a partition of $S$,
then each subset $\parti_{k}\subset S$ is called a cell.%
} of $\parti$, the group $\Aut$ gives rise to a subgroup called the
lifting group $\Aut_{\parti}$ that stabilizes the tied-parameter
vector $\para$ as well as the exponential family. The orbit partition
of the the lifting group can be used to formulate equivalent but more
compact variational problems. More specifically, let $\shatter=\shatter(\parti)$
be the orbit partition induced by the lifting group on the feature
index set $\Index=\{1\ldots m\}$, let $ $$\Real_{[\varphi]}^{m}$
denote the symmetrized subspace $\{r\in\Real^{m}\text{ s.t. }r_{i}=r_{j}\ \forall i,j\text{ in the same cell of }\varphi\}$
and define the lifted marginal polytope $\symsub{\Mean}{\shatter}$
as $\Mean\cap\Real_{[\varphi]}^{m}$, then (see Theorem 4 of \cite{bui13uai})
\begin{equation}
\sup_{\mean\in\Mean}\left\langle \para,\mean\right\rangle -A^{*}(\mean)=\sup_{\mean\in\symsub{\Mean}{\shatter}}\left\langle \para,\mean\right\rangle -A^{*}(\mean)\label{eq:lifted-mean}
\end{equation}

In practice, we need to work with convex variational approximations
of the LHS of (\ref{eq:lifted-mean}) where $\Mean$ is relaxed to
an outer bound $\outbound(\gm)$ and $A^{*}$ is approximated by a
convex function $\negEntApprox(\mean)$. We now state a similar result
for lifting general convex approximations.
\begin{thm}
\label{thm:equivalent}If $\negEntApprox(\mean)$ is convex and stabilized
by the lifting group $\Aut_{\parti}$, i.e., for all $(\naut,\paut)\in\Aut_{\parti}$,
$\negEntApprox(\mean^{\paut})=\negEntApprox(\mean)$, then $\varphi$
is the lifting partition for the approximate variational problem\textup{
\begin{equation}
\sup_{\mean\in\outbound(\gm)}\left\langle \para,\mean\right\rangle -\negEntApprox(\mean)=\sup_{\mean\in\symsub{\outbound}{\shatter}}\left\langle \para,\mean\right\rangle -\negEntApprox(\mean)\label{eq:equivalent}
\end{equation}
}

\emph{The importance of Theorem \ref{thm:equivalent} is that it shows
that it is equivalent to optimize over a subset of $\outbound(\gm)$
where pseudo-marginals in the same orbit are restricted to take the
same value. As we will show in Section \ref{sub:Lifted-TRW-Problems},
this will allow us to combine many of the terms in the objective,
which is where the computational gains will derive from. A sketch
of its proof is as follows. Consider a single pseudo-marginal vector
$\mu$. Since the objective value is the same for every $\mu^{\gamma}$
for }$(\naut,\paut)\in\Aut_{\parti}$ \emph{and since the objective
is concave, the }average\emph{ of these, $\frac{1}{|\Aut_{\parti}|}\sum_{(\naut,\paut)\in\Aut_{\parti}}\mathcal{\mu^{\gamma}}$,
must have at least as good of an objective value. Furthermore, note
that this averaged vector lives in the symmetrized subspace. Thus,
it suffices to optimize over }\textup{$\symsub{\outbound}{\shatter}$.}\end{thm}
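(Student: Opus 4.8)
The plan is to prove the two inequalities between the suprema in (\ref{eq:equivalent}) separately, writing $f(\mean)=\langle\para,\mean\rangle-\negEntApprox(\mean)$ for the common objective; establishing this equality is exactly what it means for $\varphi$ to be a lifting partition here. Since $\negEntApprox$ is convex by hypothesis and the first term is affine, $f$ is concave. One direction is immediate: because $\symsub{\outbound}{\shatter}=\outbound(\gm)\cap\Real_{[\varphi]}^{m}$ is a subset of $\outbound(\gm)$, the right-hand supremum ranges over a smaller feasible set and so cannot exceed the left-hand one. The real content is the reverse inequality, i.e.\ that symmetrizing the feasible set loses nothing.

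To set this up, I would first record the two invariance facts that power the argument. First, $f$ is invariant under the lifting group: for every $(\naut,\paut)\in\Aut_\parti$ the hypothesis gives $\negEntApprox(\mean^\paut)=\negEntApprox(\mean)$, while $\langle\para,\mean^\paut\rangle=\langle\para^{\paut^{-1}},\mean\rangle=\langle\para,\mean\rangle$ since permutations preserve the inner product and $\Aut_\parti$ stabilizes $\para$; hence $f(\mean^\paut)=f(\mean)$. Second, the relaxation $\outbound(\gm)$ is convex (a polyhedral outer bound of $\Mean$) and is itself closed under the action of $\Aut_\parti$; this symmetry of the relaxation is part of the framework of \cite{bui13uai} and holds for the local, cycle, and exchangeable constraints used in this paper. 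I would flag that this closure of $\outbound(\gm)$ is a genuine prerequisite, not automatic from convexity alone.

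With these in hand, I would fix an arbitrary feasible $\mean\in\outbound(\gm)$ and form its orbit average $\bar{\mean}=\frac{1}{|\Aut_\parti|}\sum_{(\naut,\paut)\in\Aut_\parti}\mean^\paut$, which is well defined because $\Aut_\parti$ is finite. Closure of $\outbound(\gm)$ under the action makes each summand feasible, and convexity then places $\bar{\mean}$ back in $\outbound(\gm)$. Because $f$ takes the common value $f(\mean)$ on every summand, concavity (Jensen's inequality) yields $f(\bar{\mean})\ge\frac{1}{|\Aut_\parti|}\sum_{(\naut,\paut)}f(\mean^\paut)=f(\mean)$, so the average does at least as well.

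The step I expect to require the most care is verifying that $\bar{\mean}$ actually lands in the symmetrized subspace $\Real_{[\varphi]}^{m}$; this is a Reynolds-operator argument. For any fixed $(\naut',\paut')\in\Aut_\parti$, re-indexing the defining sum by the bijection of $\Aut_\parti$ onto itself given by group multiplication with $(\naut',\paut')$ shows $\bar{\mean}^{\paut'}=\bar{\mean}$; thus $\bar{\mean}$ is fixed by the entire lifting group and so is constant on every orbit of its action on the feature index set $\Index$. Since $\varphi$ is by definition precisely this orbit partition, $\bar{\mean}\in\Real_{[\varphi]}^{m}$, and therefore $\bar{\mean}\in\symsub{\outbound}{\shatter}$. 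This exhibits, for each feasible point, a symmetrized feasible point of no smaller objective value, which gives the reverse inequality and completes the proof.
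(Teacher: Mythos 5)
Your proof is correct and takes essentially the same route as the paper: the paper's own proof simply notes that the lifting group $\Aut_{\parti}$ stabilizes both the objective and the constraint set and then invokes Lemma 1 of \cite{bui13uai}, which is exactly the orbit-averaging (Reynolds-operator) argument you spell out, and which also appears as the sketch embedded in the theorem statement. Your explicit flagging that invariance of $\outbound(\gm)$ under the group action is a genuine prerequisite corresponds to the paper's unproved assertion that the lifting group ``stabilizes the constraints,'' so you have merely filled in details the paper delegates to the cited lemma.
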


\section{Lifted Tree-Reweighted Problem}

\subsection{Symmetry of TRW Bounds}

We now show that Theorem 1 can be used to lift the TRW optimization
problem (\ref{eq:trw-primal}). Note that the applicability of Theorem
1 is not immediately obvious since $\negEntApprox$ depends on the
distribution over trees implicit in $\ea$. In establishing that the
condition in Theorem 1 holds, we need to be careful so that the choice
of the distribution over trees $\rho$ does not destroy the symmetry
of the problem.

The result below ensures that with no loss in optimality, $\rho$
can be assumed to be suitably symmetric. More specifically, let $ $$\varphi^{E}=\varphi^{E}(\parti)$
be the set of $\gm$'s edge orbits induced by the action of the lifting
group $\Aut_{\parti}$; the edge-weights $\rho_{e}$ for every $e$
in the same edge orbits can be constrained to be the same, i.e. $\rho$
can be restricted to $\mathbb{T}_{[\varphi^{E}]}$.
\begin{thm}
\label{thm:symmetric_rho}For any $\ea\in\stp$, there exists a symmetrized
$\hat{\rho}\in\symsub{\stp}{\varphi^{E}}$ that yields at least as
good an upper bound, i.e. 
\[
B(\theta,\hat{\ea})\le B(\para,\ea)\ \ \forall\para\in\Para_{[\parti]}
\]
As a consequence, in optimizing the edge appearance, $\rho$ can be
restricted to the symmetrized spanning tree polytope $\stp_{[\varphi^{E}]}$
\[
\forall\para\in\Para_{[\parti]},\ \inf_{\rho\in\stp}\Aapprox(\para,\rho)=\inf_{\rho\in\symsub{\stp}{\varphi^{E}}}\Aapprox(\para,\rho)
\]
\end{thm}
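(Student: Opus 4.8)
The plan is to reduce an arbitrary $\ea\in\stp$ to a symmetric one by averaging over the lifting group and then invoking the convexity of $\Aapprox$ in $\ea$ noted after \eqref{eq:trw-primal}. Fix $\para\in\Para_{[\parti]}$, and for $g=(\naut,\paut)\in\Aut_{\parti}$ write $\ea^{g}$ and $\tau^{g}$ for the induced actions of $g$ on the edge-weight vector and on pseudomarginals. The crux is the invariance
\[
\Aapprox(\para,\ea^{g})=\Aapprox(\para,\ea),\qquad\forall g\in\Aut_{\parti},\ \ea\in\stp .
\]

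To establish it I would start from $\Aapprox(\para,\ea^{g})=\sup_{\tau\in\outbound(\gm)}\langle\tau,\para\rangle-\negEntApprox(\tau,\ea^{g})$ and perform the change of variable $\tau\mapsto\tau^{g}$ in the supremum. This is legitimate because the relaxation $\outbound(\gm)$ is stabilized by the lifting group (it is built from orbit-respecting consistency constraints, exactly as already required for Theorem~\ref{thm:equivalent}), so $\tau^{g}$ ranges over $\outbound(\gm)$ as $\tau$ does. Three facts then combine: (i) $\langle\tau^{g},\para\rangle=\langle\tau,\para\rangle$, since $\para\in\Para_{[\parti]}$ is fixed by the feature permutation $\paut$ and the inner product is invariant under permutation of feature indices; (ii) the node-entropy sum satisfies $\sum_{s\in V(G)}H(\tau^{g}_{s})=\sum_{s\in V(G)}H(\tau_{s})$, because $\naut$ merely permutes the vertex set while each $H$ is invariant under relabeling of states; and (iii) the edge term satisfies $\sum_{e\in E(G)}I(\tau^{g}_{e})\,\ea^{g}_{e}=\sum_{e\in E(G)}I(\tau_{e})\,\ea_{e}$, because the \emph{same} edge permutation reindexes both $I(\tau^{g}_{e})$ and $\ea^{g}_{e}$. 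Facts (ii)--(iii) give the joint invariance $\negEntApprox(\tau^{g},\ea^{g})=\negEntApprox(\tau,\ea)$, and hence the displayed invariance.

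With that in hand, set $\hat\ea=\frac{1}{|\Aut_{\parti}|}\sum_{g\in\Aut_{\parti}}\ea^{g}$. Each $\ea^{g}$ lies in $\stp$ because the action of $g$ on $\gm$ is a graph automorphism, which carries spanning trees to spanning trees and so stabilizes the spanning tree polytope; convexity of $\stp$ then gives $\hat\ea\in\stp$. Averaging over the full group yields a vector invariant under the induced edge action, which is precisely the statement that $\hat\ea$ is constant on each edge orbit, i.e.\ $\hat\ea\in\Real_{[\varphi^{E}]}^{|E|}$; hence $\hat\ea\in\symsub{\stp}{\varphi^{E}}$. Finally, convexity of $\Aapprox$ in $\ea$ and Jensen's inequality give
\[
\Aapprox(\para,\hat\ea)\le\frac{1}{|\Aut_{\parti}|}\sum_{g\in\Aut_{\parti}}\Aapprox(\para,\ea^{g})=\frac{1}{|\Aut_{\parti}|}\sum_{g\in\Aut_{\parti}}\Aapprox(\para,\ea)=\Aapprox(\para,\ea),
\]
using the invariance in the middle step. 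This proves the first claim. The consequence is then immediate: since $\symsub{\stp}{\varphi^{E}}\subseteq\stp$ we have $\inf_{\ea\in\stp}\Aapprox(\para,\ea)\le\inf_{\ea\in\symsub{\stp}{\varphi^{E}}}\Aapprox(\para,\ea)$, while the first claim shows every $\Aapprox(\para,\ea)$ with $\ea\in\stp$ is matched or beaten by some symmetric $\hat\ea$, giving the reverse inequality; the two infima coincide.

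I expect the main obstacle to be the invariance itself, and within it the bookkeeping of step (iii): one must verify that the action of $g=(\naut,\paut)$ on edges is \emph{exactly} the action defining the edge orbits $\varphi^{E}$, so that the permutation applied to the weights $\ea^{g}_{e}$ and the permutation applied to the mutual informations $I(\tau^{g}_{e})$ are one and the same. This requires confirming that the node permutation $\naut$, the feature permutation $\paut$, and the induced edge action are mutually compatible under the definition of $\Aut_{\parti}$. Once this compatibility is checked, the group-averaging and Jensen steps are routine.
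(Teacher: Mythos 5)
Your proposal is correct and follows essentially the same route as the paper: group-average $\ea$ over the lifting group $\Aut_{\parti}$, establish the invariance $\Aapprox(\para,\ea^{g})=\Aapprox(\para,\ea)$ via a change of variable $\tau\mapsto\tau^{g}$ in the supremum (which is exactly the content of the paper's Lemma~\ref{lem:symmetry-of-B}, here inlined, together with $\para^{\naut}=\para$ for tied parameters), and conclude by convexity of $\Aapprox$ in $\ea$ and Jensen's inequality. Your explicit verification that each $\ea^{g}\in\stp$ (automorphisms map spanning trees to spanning trees) and your flagged compatibility check between the induced edge action and $\varphi^{E}$ are points the paper handles by citing Theorem~1 of \cite{bui13uai} rather than by a genuinely different argument.
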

\begin{proof}
Let $\ea$ be the argmin of the LHS, and define $\hat{\ea}=\frac{1}{|\Aut_{\parti}|}\sum_{\pi\in\Aut_{\parti}}\rho^{\pi}$
so that $\hat{\rho}\in\symsub{\stp}{\varphi^{E}}$. For all $(\pi,\gamma)\in\Aut_{\parti}$
and for all tied-parameter $\para\in\symsub{\Theta}{\parti}$, $\theta^{\pi}=\theta$,
so $B(\theta,\rho^{\pi})=B(\theta^{\naut},\rho^{\naut})$. By theorem
1 of \cite{bui13uai}, $\naut$ must be an automorphism of the graph
$\gm$. By lemma \ref{lem:symmetry-of-B} (see supplementary material),
$B(\theta^{\naut},\rho^{\naut})=\Aapprox(\para,\ea)$. Thus $B(\theta,\rho^{\pi})=B(\para,\ea)$.
$ $Since $B$ is convex w.r.t. $\ea$, by Jensen's inequality we
have that $ $$B(\theta,\hat{\ea})\le\frac{1}{|\Aut_{\parti}|}\sum_{\pi\in\Aut_{\parti}}B(\theta,\rho^{\pi})=B(\para,\ea).$
\end{proof}
Using a symmetric choice of $\rho$, the TRW bound $\negEntApprox$
then satisfies the condition of theorem \ref{thm:equivalent}, enabling
the applicability of the general lifted variational inference framework.
\begin{thm}
\label{thm:lifted-trw-equivalent}For a fixed $\ea\in\symsub{\stp}{\varphi^{E}}$,
$\varphi$ is the lifting partition for the TRW variational problem
\begin{equation}
\sup_{\tau\in\outbound(\gm)}\left\langle \tau,\theta\right\rangle -\negEntApprox(\tau,\ea)=\sup_{\tau\in\symsub{\outbound}{\shatter}}\left\langle \tau,\theta\right\rangle -\negEntApprox(\tau,\ea)\label{eq:lifted-trw}
\end{equation}

\end{thm}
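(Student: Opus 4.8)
The plan is to verify that $\negEntApprox(\tau,\ea)$ satisfies the hypothesis of Theorem~\ref{thm:equivalent} for the \emph{fixed} symmetric choice $\ea\in\symsub{\stp}{\varphi^{E}}$, and then simply invoke that theorem. Theorem~\ref{thm:equivalent} already does all the heavy lifting: once we know the approximate negative entropy is convex and is stabilized by the lifting group $\Aut_{\parti}$, the symmetrization argument (averaging a candidate pseudo-marginal over the group orbit) gives the claimed equality of optima. So the real content of this theorem is reduced to a single stabilization check.

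First I would recall from the background that for any fixed $\ea\in\stp(\gm)$ the TRW surrogate $\negEntApprox(\tau,\ea)$ is convex in $\tau$; this is stated right after~(\ref{eq:trw-primal}), so convexity is immediate and I would not belabor it. The substantive step is to show that for every $(\naut,\paut)\in\Aut_{\parti}$ we have $\negEntApprox(\tau^{\paut},\ea)=\negEntApprox(\tau,\ea)$. I would write $\negEntApprox$ out explicitly as the tree-entropy combination $\sum_{s\in V(\gm)}H(\tau_{s})-\sum_{e\in E(\gm)}\ea_{e}\,I(\tau_{e})$ and examine the effect of applying $\paut$ to $\tau$. The automorphism $(\naut,\paut)$ permutes vertices and edges of $\gm$ according to $\naut$; it sends the singleton entropy term for vertex $s$ to the term for $\naut(s)$ and the mutual information term for edge $e$ to that for $\naut(e)$. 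The vertex sum is therefore invariant because $\naut$ merely permutes $V(\gm)$. For the edge sum, the map $e\mapsto\naut(e)$ permutes $E(\gm)$, so invariance holds provided the edge coefficients are constant along edge orbits, i.e. $\ea_{\naut(e)}=\ea_{e}$.

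This is exactly where the restriction $\ea\in\symsub{\stp}{\varphi^{E}}$ is used, and it is the one point that requires care rather than mechanical checking. By definition $\symsub{\stp}{\varphi^{E}}$ forces $\ea_{e}$ to be constant on each edge orbit of the lifting group, and $\naut$ acts on $E(\gm)$ by the same orbit structure $\varphi^{E}$; hence $\ea_{\naut(e)}=\ea_{e}$ for all $e$, and the two sums recombine to give $\negEntApprox(\tau^{\paut},\ea)=\negEntApprox(\tau,\ea)$. I expect the main obstacle to be bookkeeping: one must confirm that the action of $\paut$ on the pseudo-marginal coordinates $\tau$ induces precisely the vertex-and-edge relabelling $\naut$ on the terms of $\negEntApprox$ (this is where the fact, used in the proof of Theorem~\ref{thm:symmetric_rho}, that $\naut$ is a genuine graph automorphism of $\gm$ enters), and that the orbit partition $\varphi^{E}$ appearing in the symmetrized polytope is the same orbit structure by which $\Aut_{\parti}$ permutes edges. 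Both follow from the construction of the lifting group and its induced orbit partitions in~\cite{bui13uai}.

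Once the stabilization identity is established, the proof concludes in one line: $\negEntApprox(\cdot,\ea)$ is convex and $\Aut_{\parti}$-invariant, so Theorem~\ref{thm:equivalent} applies with this particular $\negEntApprox$ and yields~(\ref{eq:lifted-trw}) directly, with $\varphi$ as the lifting partition. I would present the argument in this order---convexity (cited), orbit-wise constancy of $\ea$, the two-sum invariance computation, then the appeal to Theorem~\ref{thm:equivalent}---so that the dependence on the symmetric choice of $\ea$ guaranteed by Theorem~\ref{thm:symmetric_rho} is made explicit and the reduction to the already-proved general lifting result is transparent.
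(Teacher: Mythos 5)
Your proposal is correct and follows essentially the same route as the paper's proof: fix $\ea\in\symsub{\stp}{\varphi^{E}}$, use orbit-wise constancy ($\ea^{\naut}=\ea$) together with the rearrangement identity $\negEntApprox(\tau^{\naut},\ea^{\naut})=\negEntApprox(\tau,\ea)$ to conclude that $\negEntApprox(\cdot,\ea)$ is stabilized by $\Aut_{\parti}$, then invoke Theorem~\ref{thm:equivalent}. The only difference is presentational—you inline the invariance computation that the paper delegates to Lemma~\ref{lem:symmetry-of-B}—and your parenthetical bookkeeping points (that $\naut$ is a genuine graph automorphism, and that the action of $\paut$ on $\tau$ reduces to the permuting action of $\naut$ in the overcomplete representation) match the paper's own remarks.
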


\subsection{Lifted TRW Problems\label{sub:Lifted-TRW-Problems}}

\global\long\def\liftedB{\overline{\negEntApprox}}

We give the explicit lifted formulation of the TRW optimization problem
(\ref{eq:lifted-trw}). As in \cite{bui13uai}, we restrict $\tau$
to $\symsub{\outbound}{\shatter}$ by introducing the lifted variables
$\bar{\tau}_{j}$ for each cell $\varphi_{j}$, and for all $i\in\varphi_{j}$,
enforcing that $\tau_{i}=\bar{\tau}_{j}$. Effectively, we substitute
every occurrence of $\tau_{i}$, $i\in\varphi_{j}$ by $\bar{\tau}_{j}$;
in vector form, $\tau$ is substituted by $\expandmat\bar{\tau}$
where $D$ is the characteristic matrix of the partition $\varphi$:
$\expandmat_{ij}=1$ if $i\in\shatter_{j}$ and $0$ otherwise. This
results in the lifted form of the TRW problem
\begin{equation}
\sup_{D\bar{\tau}\in\outbound}\left\langle \bar{\tau},\bar{\theta}\right\rangle -\overline{\negEntApprox}(\bar{\tau},\bar{\ea})\label{eq:lifted-trw-tosolve}
\end{equation}
where $\bar{\para}=D^{\top}\para$; $\overline{}$$\liftedB$ is obtained
from $\negEntApprox$ via the above substitution; and $\bar{\ea}$
is the edge appearance per edge-orbit: for every edge orbit $\eorb$,
and for every edge $e\in\eorb$, $\ea_{e}=\bar{\ea}_{\eorb}$. Using
an alternative but equivalent form $\negEntApprox=-\sum_{v\in V}(1-\sum_{e\in Nb(v)}\ea_{e})H(\tau_{v})-\sum_{e\in E}\ea_{e}H(\tau_{e})$,
we obtain the following explicit form for 
\begin{eqnarray}
\liftedB(\bar{\tau},\bar{\ea}) & = & -\sum_{\vorb\in\bar{V}}\left(|\vorb|-\sum_{\eorb\in N(\vorb)}|\eorb|d(\vorb,\eorb)\bar{\ea}_{\eorb}\right)H(\bar{\tau}_{\vorb})\nonumber \\
 &  & -\sum_{\eorb\in\bar{E}}|\eorb|\bar{\ea}_{\eorb}H(\bar{\tau}_{\eorb})\label{eq:lifted-trw-explicit}
\end{eqnarray}
Intuitively, the above can be viewed as a combination of node and
edge entropies defined on nodes and edges of the lifted graph $\bar{\gm}$.
Nodes of $\bar{\gm}$ are the node orbits of $\gm$ while edges are
the edge-orbits of $\gm$. $\bar{\gm}$ is not a simple graph: it
can have self-loops or multi-edges between the same node pair (see
Fig. \ref{fig:lifted-graph}). We encode the incidence on this graph
as follows: $d(\vorb,\eorb)=0$ if $\vorb$ is not incident to $\eorb$,
$d(\vorb,\eorb)=1$ if $\vorb$ is incident to $\eorb$ and $\eorb$
is not a loop, $d(\vorb,\eorb)=2$ if $\eorb$ is a loop incident
to $\vorb$. The \emph{entropy} at the node orbit $\vorb$ is defined
as
\[
H(\bar{\tau}_{\vorb})=-\sum_{t\in\domain}\liftpmeanunary{\vorb}t\ln(\liftpmeanunary{\vorb}t)
\]
and the entropy at the edge orbit $\eorb$ is
\begin{eqnarray*}
H(\bar{\tau}_{\eorb}) & = & -\sum_{t,h\in\domain}\bar{\tau}_{\orbitof{\{e_{1}:t,e_{2}:h\}}}\ln(\bar{\tau}_{\orbitof{\{e_{1}:t,e_{2}:h\}}})
\end{eqnarray*}
where $\{e_{1},e_{2}\}$ for $e_{1},e_{2}\in V$ is a representative
(any element) of $\eorb$, $\{\pair{e_{1}}t,\pair{e_{2}}h\}$ is an
assignment of the ground edge $\{e_{1},e_{2}\}$, and $\overline{\{\pair{e_{1}}t,\pair{e_{2}}h\}}$
is the assignment orbit. As in \cite{bui13uai}, we write $\overline{\{\pair{e_{1}}t,\pair{e_{2}}t\}}$
as $\pair{\eorb}t$, and $ $for $t<h$, $\overline{\{\pair{e_{1}}t,\pair{e_{2}}h\}}$
as $\pair{\aorb}{(t,h)}$ where $\aorb$ is the arc-orbit $\orbitof{(e_{1},e_{2})}$.

When $\outbound$ is the local or cycle polytope, the constraints
$ $$D\bar{\tau}\in\outbound$ yield the lifted local (or cycle) polytope
respectively. For these constraints, we use the same form given in
\cite{bui13uai}. In section \ref{sec:exch-constraints}, we describe
a set of additional constraints for further tightening when some cluster
of nodes are exchangeable.

\begin{figure}
\centering{}\includegraphics[scale=0.4]{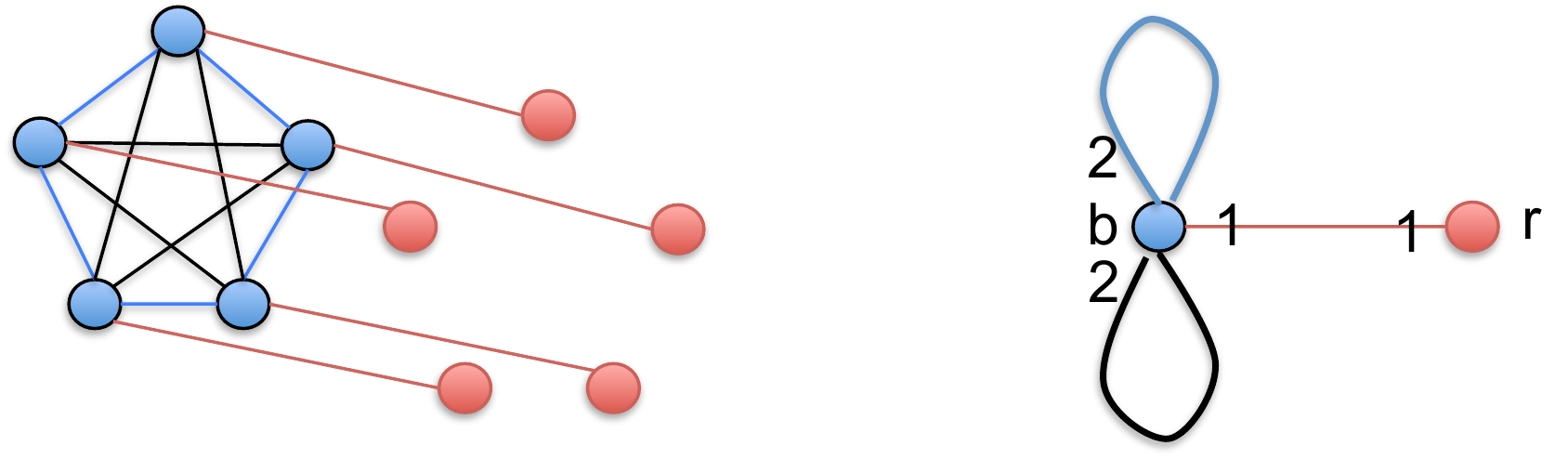}\caption{\label{fig:lifted-graph}Left: ground graphical model. Same colored
nodes and edges have the same parameters. Right: lifted graph showing
2 node orbits (\textbf{b} and \textbf{r}), and 3 edge orbits. Numbers
on the lifted graph representing the incidence degree $d(\vorb,\eorb)$
between an edge and a node orbit.}
\end{figure}

\textbf{Example.} Consider the MRF shown in Fig. \ref{fig:lifted-graph}
(left) with 10 binary variables that we denote $B_{i}$ (for the blue
nodes) and $R_{i}$ (for the red nodes). The node and edge coloring
denotes shared parameters. Let $\theta_{b}$ and $\theta_{r}$ be
the single-node potentials used for the blue and red nodes, respectively.
Let $\theta_{r_{e}}$ be the edge potential used for the red edges
connecting the blue and red nodes, $\theta_{b_{e}}$ for the edge
potential used for the blue edges $(B_{i},B_{i+1})$, and $\theta_{k_{e}}$
for the edge potential used for the black edges $(B_{i},B_{i+2})$.

There are two node orbits: ${\bf b}=\{B_{1},\ldots,B_{5}\}$ and ${\bf r}=\{R_{1},\ldots,R_{5}\}.$
There are three edge orbits: ${\bf r_{e}}$ for the red edges, ${\bf b_{e}}$
for the blue edges , and ${\bf k_{e}}$ for the black edges. The size
of the node and edge orbits are all 5 (e.g., $|{\bf b}|=|{\bf b_{e}}|=5$),
and $d({\bf b},{\bf r_{e}})=d({\bf r},{\bf r_{e}})=1$, $d({\bf b},{\bf b_{e}})=d({\bf b},{\bf k_{e}})=2$.
Suppose that $\rho$ corresponds to a uniform distribution over spanning
trees, which satisfies the symmetry needed by Theorem \ref{thm:symmetric_rho}.
We then have $\overline{\rho}_{{\bf r_{e}}}=1$ and $\overline{\rho}_{{\bf b_{e}}}=\overline{\rho}_{{\bf k_{e}}}=\frac{2}{5}$.
Putting all of this together, the lifted TRW entropy is given by
$\liftedB(\bar{\tau},\bar{\ea})=8H(\overline{\tau}_{{\bf b}})-5H(\overline{\tau}_{{\bf r_{e}}})-2H(\overline{\tau}_{{\bf b_{e}}})-2H(\overline{\tau}_{{\bf k_{e}}})$.
We illustrate the expansion of the entropy of the red edge orbit $H(\bar{\tau}_{{\bf r_{e}}})$.
This edge orbit has 2 corresponding arc-orbits: ${\bf rb_{a}}=\{(R_{i},B_{i})\}$
and $ $${\bf br_{a}}=\{(B_{i},R_{i})\}$. Thus, $H(\bar{\tau}_{{\bf r_{e}}})=-\bar{\tau}_{{\bf r_{e}}:00}\ln\bar{\tau}_{{\bf r_{e}}:00}-\bar{\tau}_{{\bf r_{e}}:11}\ln\bar{\tau}_{{\bf r_{e}}:11}-\bar{\tau}_{{\bf rb_{a}}:01}\ln\bar{\tau}_{{\bf rb_{a}}:01}-\bar{\tau}_{{\bf br_{a}}:01}\ln\bar{\tau}_{{\bf br_{a}}:01}$.

Finally, the linear term in the objective is given by $\left\langle \bar{\tau},\bar{\theta}\right\rangle =$$5\left\langle \bar{\tau}_{{\bf b}},\theta_{b}\right\rangle +5\left\langle \bar{\tau}_{{\bf r}},\theta_{r}\right\rangle +5\left\langle \bar{\tau}_{{\bf r_{e}}},\theta_{r_{e}}\right\rangle +5\left\langle \bar{\tau}_{{\bf b_{e}}},\theta_{b_{e}}\right\rangle +5\left\langle \bar{\tau}_{{\bf k_{e}}},\theta_{k_{e}}\right\rangle $
where, as an example, $\left\langle \bar{\tau}_{{\bf r_{e}}},\theta_{r_{e}}\right\rangle =\bar{\tau}_{{\bf r_{e}}:00}\theta_{r_{e},00}+\bar{\tau}_{{\bf r_{e}}:11}\theta_{r_{e},11}+\bar{\tau}_{{\bf {\bf br_{a}}}:01}\theta_{r_{e},01}+\bar{\tau}_{{\bf {\bf rb_{a}}}:01}\theta_{r_{e},10}$

\subsection{Optimization using Frank-Wolfe}

What remains is to describe how to optimize Eq. \ref{eq:lifted-trw-tosolve}.
Our lifted tree-reweighted algorithm is based on Frank-Wolfe, also
known as the conditional gradient method \cite{frank-wolfe56,ICML2013_jaggi13}.
First, we initialize with a pseudo-marginal vector corresponding to
the uniform distribution, which is guaranteed to be in the lifted
marginal polytope. Next, we solve the linear program whose objective
is given by the gradient of the objective Eq. \ref{eq:lifted-trw-tosolve}
evaluated at the current point, and whose constraint set is $\outbound$.
When using the lifted cycle relaxation, we solve this linear program
using a cutting-plane algorithm \cite{bui13uai,sontag}. We then perform
a line search to find the optimal step size using a golden section
search (a type of binary search that finds the maxima of a unimodal
function), and finally repeat using the new pseudo-marginal vector.
We warm start each linear program using the optimal basis found in
the previous run, which makes the LP solves extremely fast after the
first couple of iterations. Although we use a generic LP solver in
our experiments, it is also possible to use dual decomposition to
derive efficient algorithms  specialized to graphical models \cite{sontag08tightening}.

\section{Lifted Maximum Spanning Tree\label{sec:lifted-mst}}

Optimizing the TRW edge appearance probability $\ea$ requires finding
the maximum spanning tree (MST) in the ground graphical model. For
lifted TRW, we need to perform MST while using only information from
the node and edge orbits, without referring to the ground graph. In
this section, we present a lifted MST algorithm for symmetric graphs
which works at the orbit level.

Suppose that we are given a \emph{weighted} graph $(\mathcal{G},w)$,
its automorphism group $\Aut=Aut(\mathcal{G})$ and its node and edge
orbits. We aim to derive an algorithm analogous to the Kruskal's algorithm,
but with complexity depends only on the number of node/edge orbits
of $\mathcal{G}$. However, if the algorithm has to return an actual
spanning tree of $\mathcal{G}$ then clearly its complexity cannot
be less than $O(|V|).$ Instead, we consider an equivalent problem:
solving a linear program on the spanning-tree polytope
\begin{equation}
\sup_{\rho\in\mathbb{T}(\mathcal{G})}\left\langle \rho,w\right\rangle \label{eq:spanning-tree-lp}
\end{equation}
The same mechanism for lifting convex optimization problem (Lemma
1 in \cite{bui13uai}) applies to this problem. Let $\varphi^{E}$
be the edge orbit partition, then an equivalent lifted problem problem
is
\begin{equation}
\sup_{\rho\in\mathbb{T}_{[\varphi^{E}]}}\left\langle \rho,w\right\rangle \label{eq:spanning-tree-lp-sym}
\end{equation}
Since $\ea_{e}$ is constrained to be the same for edges in the same
orbit, it is now possible to solve (\ref{eq:spanning-tree-lp-sym})
with complexity depending only on the number of orbits. Any solution
$\ea$ of the LP (\ref{eq:spanning-tree-lp}) can be turned into a
solution $\bar{\rho}$ of (\ref{eq:spanning-tree-lp-sym}) by letting
$\bar{\rho}(\eorb)=\frac{1}{|\eorb|}\sum_{e'\in\eorb}\rho(e')$ .

\subsection{Lifted Kruskal's Algorithm}

The Kruskal's algorithm first sorts the edges according to their decreasing
weight. Then starting from an empty graph, at each step it greedily
attempts to add the next edge while maintaining the property that
the used edges form a forest (containing no cycle). The forest obtained
at the end of this algorithm is a maximum-weight spanning tree. 

Imagine how Kruskal's algorithm would operate on a weighted graph
$\mathcal{G}$ with non-trivial automorphisms. Let $\mathbf{e}_{1},\ldots,\mathbf{e}_{k}$
be the list of edge-orbits sorted in the order of decreasing weight
(the weights $w$ on all edges in the same orbit by definition must
be the same). The main question therefore is how many edges in each
edge-orbit $\mathbf{e}_{i}$ will be added to the spanning tree by
the Kruskal's algorithm. Let $\mathcal{G}_{i}$ be the subgraph of
$\gm$ formed by the set of all the edges and nodes in $\mathbf{e}_{1},\ldots\eorb_{i}$.
Let $V(\gm)$ and $C(\gm)$ denote the set of nodes and set of connected
components of a graph, respectively. Then (see the supplementary material
for proof)
\begin{lem}
\label{lem:lifted-mst}The number of edges in $\eorb_{i}$ appearing
in the MST found by the Kruskal's algorithm is $\delta_{V}^{(i)}-\delta_{C}^{(i)}$
where $\delta_{V}^{(i)}=|V(\mathcal{G}_{i})|-|V(\mathcal{G}_{i-1})|$
and $\delta_{C}^{(i)}=|C(\gm_{i})|-|C(\gm_{i-i})|$. Thus a solution
for the linear program (\ref{eq:spanning-tree-lp-sym}) is $\bar{\ea}(\eorb_{i})=\frac{\delta_{V}^{(i)}-\delta_{C}^{(i)}}{|\eorb_{i}|}$.
\end{lem}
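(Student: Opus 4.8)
The plan is to reduce the per-orbit edge count to a purely structural statement about Kruskal's forest and then read off the spanning-tree-polytope solution by symmetrization. Write $F_i$ for the set of edges selected by Kruskal's algorithm after it has finished processing all edges in the orbits $\eorb_1,\ldots,\eorb_i$; since Kruskal never deletes an edge, $F_1\subseteq F_2\subseteq\cdots$, and the edges of $F_i\setminus F_{i-1}$ are exactly the orbit-$\eorb_i$ edges that enter the tree. First I would establish the key structural claim: $F_i$ is a spanning forest (a maximal acyclic subset) of $\mathcal{G}_i$. Acyclicity holds by construction. For maximality, consider any $e\in E(\mathcal{G}_i)\setminus F_i$; it was examined at some moment while the currently selected set was some $F'\subseteq F_i$, and was rejected only because $e$ closed a cycle with $F'$. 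As $F'\subseteq F_i$, that same cycle survives in $F_i\cup\{e\}$, so no edge of $\mathcal{G}_i$ can be added to $F_i$ without creating a cycle, and hence $F_i$ is a maximal forest of $\mathcal{G}_i$.

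The counting step is then immediate. Any spanning forest of a graph on $|V(\mathcal{G}_i)|$ vertices with $|C(\mathcal{G}_i)|$ connected components has exactly $|V(\mathcal{G}_i)|-|C(\mathcal{G}_i)|$ edges, so $|F_i|=|V(\mathcal{G}_i)|-|C(\mathcal{G}_i)|$. Telescoping gives the number of orbit-$\eorb_i$ edges in the tree as $|F_i|-|F_{i-1}| = (|V(\mathcal{G}_i)|-|V(\mathcal{G}_{i-1})|)-(|C(\mathcal{G}_i)|-|C(\mathcal{G}_{i-1})|)=\delta_V^{(i)}-\delta_C^{(i)}$, which is the first claim.

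For the linear-programming consequence, I would use that Kruskal's algorithm returns a maximum-weight spanning tree, whose edge-incidence vector $\rho$ is an optimal vertex of the LP (\ref{eq:spanning-tree-lp}) over $\mathbb{T}(\mathcal{G})$. Because all weights are constant on edge orbits, the objective $\langle\rho,w\rangle$ is invariant under the automorphism group, so averaging $\rho$ over $\Aut$ (as in Lemma 1 of \cite{bui13uai}) produces an equally optimal point lying in the symmetrized polytope $\mathbb{T}_{[\varphi^{E}]}$. Since $\Aut$ acts transitively on each edge orbit, this average is constant on $\eorb_i$ with value $\frac{1}{|\eorb_i|}\sum_{e\in\eorb_i}\rho(e)$, i.e. the fraction of orbit-$\eorb_i$ edges lying in the tree, which by the previous step equals $\frac{\delta_V^{(i)}-\delta_C^{(i)}}{|\eorb_i|}=\bar{\ea}(\eorb_i)$.

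The main obstacle is making the structural claim robust to the choice of tie-breaking inside an orbit (and across distinct orbits of equal weight): a priori the set $F_i$ could depend on the order in which equal-weight edges are scanned. The maximal-forest characterization above resolves this, because it shows $|F_i|$ equals the rank $|V(\mathcal{G}_i)|-|C(\mathcal{G}_i)|$ of the edge set $E(\mathcal{G}_i)$ in the graphic matroid, a quantity independent of scanning order; it is this order-independence, rather than the elementary forest count or the symmetrization, that is the real content to pin down.
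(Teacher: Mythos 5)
Your proof is correct and follows essentially the same route as the paper's: Kruskal's selected set restricted to $\mathcal{G}_i$ is a spanning (maximal) forest, hence has exactly $|V(\mathcal{G}_i)|-|C(\mathcal{G}_i)|$ edges, the orbit count follows by telescoping, and orbit-averaging the optimal tree's indicator vector yields the solution of the lifted LP. Your additions---the explicit maximality argument, the matroid-rank observation that makes the count independent of tie-breaking, and the group-averaging justification for optimality of $\bar{\ea}$---are careful elaborations of steps the paper states tersely, not a different approach.
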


\subsection{Lifted Counting of the Number of Connected Components}

We note that counting the number of nodes can be done simply by adding
the size of each node orbit. The remaining difficulty is how to count
the number of connected components of a given graph%
\footnote{Since we are only interested in connectivity in this subsection, the
weights of $\mathcal{G}$ play no role. Thus, orbits in this subsection
can also be generated by the automorphism group of the unweighted
version of $\mathcal{G}$. %
} $\mathcal{G}$ using only information at the orbit level. Let $\bar{\mathcal{G}}$$ $
be the lifted graph of $\gm$. Then (see supplementary material for
proof) 
\begin{lem}
\label{lem:counting-connected-component}If $\bar{\mathcal{G}}$ is
connected then all connected components of $\mathcal{G}$ are isomorphic.
Thus if furthermore $\gm'$ is a connected component of $\gm$ then
$|C(\gm)|=|V(\gm)|/|V(\gm')|$.
\end{lem}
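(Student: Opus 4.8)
The plan is to prove the statement through the action of the automorphism group $\Aut$ on the connected components of $\gm$. The first thing I would record is that every $\aut\in\Aut$ permutes the connected components of $\gm$: since $\aut$ preserves adjacency, it carries any maximal connected subgraph to a maximal connected subgraph, so $\Aut$ acts on the set $C(\gm)$, and the restriction of $\aut$ to any component is a graph isomorphism onto its image. This reduces the whole lemma to showing that $\Aut$ acts \emph{transitively} on $C(\gm)$ whenever $\bar{\gm}$ is connected: transitivity immediately supplies, for any two components, an automorphism carrying one onto the other, and hence an isomorphism between them.

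The main step, and the part I expect to be the crux, is to derive transitivity from the connectivity of the lifted graph $\bar{\gm}$. I would argue by contraposition. Suppose $\Aut$ is not transitive on $C(\gm)$, so the components split into at least two $\Aut$-orbits; group these orbits into two nonempty $\Aut$-invariant families and let $\mathcal{A}$ and $\mathcal{B}$ be the corresponding unions of components. Because $\mathcal{A}$ and $\mathcal{B}$ are unions of distinct connected components, $\gm$ contains no edge between them. Moreover, since each is a union of $\Aut$-orbits of components, its vertex set is $\Aut$-invariant and therefore a union of node orbits; consequently every edge orbit lies entirely inside $\mathcal{A}$ or entirely inside $\mathcal{B}$, since all edges of a single orbit join the same pair of node orbits, and those two node orbits must sit on the same side given the absence of $\mathcal{A}$--$\mathcal{B}$ edges. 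I would then check that no edge of $\bar{\gm}$ can join a node orbit contained in $V(\mathcal{A})$ to one contained in $V(\mathcal{B})$, for such an edge of $\bar{\gm}$ would correspond to an edge orbit running between $\mathcal{A}$ and $\mathcal{B}$, contradicting the previous point. The node orbits thus partition into two nonempty sets with no edge of $\bar{\gm}$ between them, so $\bar{\gm}$ is disconnected, which is the contrapositive of the claim.

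Finally, granting transitivity, all components of $\gm$ are pairwise isomorphic, so in particular they share a common vertex count $|V(\gm')|$ for any fixed component $\gm'$. Since the components partition $V(\gm)$, counting vertices gives $|V(\gm)|=|C(\gm)|\cdot|V(\gm')|$, i.e. $|C(\gm)|=|V(\gm)|/|V(\gm')|$, as claimed. The only care needed throughout is the routine bookkeeping that $\Aut$-invariant vertex sets are exactly unions of node orbits and that each edge orbit respects the $\mathcal{A}/\mathcal{B}$ split; the genuine content lies entirely in the contrapositive argument above.
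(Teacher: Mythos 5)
Your proposal is correct, but it takes a genuinely different route from the paper's proof. You reduce the lemma to transitivity of the $\Aut$-action on the set of connected components and establish that by contraposition: a non-transitive action yields two nonempty $\Aut$-invariant unions of components $\mathcal{A},\mathcal{B}$ whose vertex sets, being invariant, are unions of node orbits; since all edges in a single edge orbit join the same pair of node orbits and no ground edge crosses between distinct components, no edge of $\bar{\gm}$ can join the two sides, so $\bar{\gm}$ is disconnected. The paper instead argues directly: it first proves an auxiliary lemma, by induction on path length in $\bar{\gm}$, that if two node orbits $\bu,\bv$ are connected in the lifted graph then from any fixed $u\in\bu$ some $v\in\bv$ is reachable in $\gm$; it then uses this to find a single node orbit meeting both of two given components $\gm_{1},\gm_{2}$, takes an automorphism $\naut$ carrying one representative to the other, and verifies explicitly that $\naut(\gm_{1})=\gm_{2}$ by mapping paths to paths in both directions. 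Your contrapositive is shorter, avoids the induction entirely, and isolates the clean structural fact that connectivity of the lifted (quotient) graph is exactly transitivity of $\Aut$ on components; the paper's argument is more constructive in that it exhibits the automorphism carrying one component onto another, which matches the constructive flavor of the surrounding algorithmic material (fixing a node $u$ and computing $\bar{\gm}[u]$ to extract one component). Both proofs in effect establish the same transitivity statement, and your final counting step $|C(\gm)|=|V(\gm)|/|V(\gm')|$ is identical to the paper's.
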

To find just one connected component, we can choose an arbitrary node
$u$ and compute $\bar{\mathcal{G}}[u]$, the lifted graph fixing
$u$ (see section 8.1 in \cite{bui13uai}), then search for the connected
component in $ $$\bar{\mathcal{G}}[u]$ that contains $\{u\}$. 
Finally, if $\bar{\mathcal{G}}$ is not connected, we simply apply
lemma \ref{lem:counting-connected-component} for each connected component
of $\bar{\mathcal{G}}$.

The final lifted Kruskal's algorithm combines lemma \ref{lem:lifted-mst}
and \ref{lem:counting-connected-component} while keeping track of
the set of connected components of $\bar{\gm}_{i}$ incrementally.
The full algorithm is given in the supplementary material.

\section{Tightening via Exchangeable Polytope Constraints\label{sec:exch-constraints}}

One type of symmetry often found in first-order probabilistic models
are large sets of exchangeable random variables. In certain cases,
exact inference with exchangeable variables is possible via lifted
counting elimination and its generalization \cite{milch&al08,bui12lifted}.
The drawback of these exact methods is that they do not apply to many
models (e.g., those with transitive clauses). Lifted variational inference
methods do not have this drawback, however local and cycle relaxation
can be shown to be loose in the exchangeable setting, a potentially
serious limitation compared to earlier work. 

To remedy this situation, we now show how to take advantage of highly
symmetric subset of variables to tighten the relaxation of the lifted
marginal polytope.

\global\long\def\exchvars{\chi}

\global\long\def\config{\mathfrak{C}}

We call a set of random variables $\exchvars$ an \emph{exchangeable}
cluster iff $\exchvars$ can be arbitrary permuted while preserving
the probability model. Mathematically, the lifting group $\Aut_{\Delta}$
acts on $\exchvars$ and the image of the action is isomorphic to
$\symg(\exchvars)$, the symmetric group on $\exchvars$. The distribution
of the random variables in $\exchvars$ is also exchangeable in the
usual sense.

Our method for tightening the relaxation of the marginal polytope
is based on lift-and-project, wherein we introduce auxiliary variables
specifying the joint distribution of a large cluster of variables,
and then enforce consistency between the cluster distribution and
the pseudo-marginal vector \cite{sherali_adams90,sontag08tightening,wainwright2008graphical}.
In the ground model, one typically works with small clusters (e.g.,
triplets) because the number of variables grows exponentially with
cluster size. The key (and nice) difference in the lifted case is
that we can make use of very large clusters of highly symmetric variables:
while the grounded relaxation would clearly blow up, the corresponding
lifted relaxation can still remain compact.

Specifically, for an exchangeable cluster $\exchvars$ of arbitrary
size, one can add cluster consistency constraints for the entire cluster
and still maintain tractability. To keep the exposition simple, we
assume that the variables are binary. Let $\config$ denote a $\exchvars$-configuration,
i.e., a function $\config:\exchvars\rightarrow\{0,1\}$. The set $\{\tau_{\config}^{\exchvars}\ \vert\ \forall\ \text{configuration }\config\}$
is the collection of $\exchvars$-cluster auxiliary variables. Since
$\exchvars$ is exchangeable, all nodes in $\exchvars$ belong to
the same node orbit; we call this node orbit $\vorb(\exchvars)$.
Similarly, $\eorb(\exchvars)$ and $\aorb(\exchvars)$ denote the
single edge and arc orbit that contains all edges and arcs in $\exchvars$
respectively. Let $u_{1},u_{2}$ be two distinct nodes in $\exchvars$.
To enforce consistency between the cluster $\exchvars$ and the edge
$\{u_{1},u_{2}\}$ in the ground model, we introduce the constraints
\begin{equation}
\exists\tau^{\exchvars}:\,\sum_{\config\, s.t.\,\config(u_{i})=s_{i}}\tau_{\config}^{\exchvars}=\tau_{\pair{u_{1}}{s_{1},}\pair{u_{2}}{s_{2}}}\quad\forall s_{i}\in\{0,1\}\label{eq:ground-cluster}
\end{equation}
These constraints correspond to using intersection sets of size two,
which can be shown to be the exact characterization of the marginal
polytope involving variables in $\exchvars$ if the graphical model
only has pairwise potentials. If higher-order potentials are present,
a tighter relaxation could be obtained by using larger intersection
sets together with the techniques described below.

The constraints in (\ref{eq:ground-cluster}) can be methodically
lifted by replacing occurrences of ground variables with lifted variables
at the orbit level. First observe that in place of the grounded variables
$\tau_{\pair{u_{1}}{s_{1},}\pair{u_{2}}{s_{2}}}$, the lifted local
relaxation has three corresponding lifted variables, $\bar{\tau}_{\be(\exchvars):00},\,\bar{\tau}_{\be(\exchvars):11}$
and $\bar{\tau}_{\mathbf{a}(\exchvars):01}$. Second, we consider
the orbits of the set of configurations $\config$. Since $\exchvars$
is exchangeable, there can be only $\left|\exchvars\right|+1$ $\exchvars$-configuration
orbits; each orbit contains all configurations with precisely $k$
1's where $k=0\dots\left|\exchvars\right|$. Thus, instead of the
$2^{\left|\exchvars\right|}$ ground auxiliary variables, we only
need $\left|\exchvars\right|+1$ lifted cluster variables. Further
manipulation leads to the following set of constraints, which we call
\emph{lifted exchangeable polytope} constraints.
\begin{thm}
\label{thm:exch-constraints}Let $\exchvars$ be an exchangeable cluster
of size $n$; $\eorb(\exchvars)$ and $\aorb(\exchvars)$ be the single
edge and arc orbit of the graphical model that contains all edges
and arcs in $\exchvars$ respectively; $\bar{\tau}$ be the lifted
marginals. Then there exist $c_{k}^{\exchvars}\ge0$, $k=0\ldots n$
such that
\begin{eqnarray*}
\sum_{k=0}^{n-2}\frac{(n-k)(n-k-1)}{n(n-1)}c_{k}^{\exchvars} & = & \bar{\tau}_{\be(\exchvars):00}\\
\sum_{k=0}^{n-2}\frac{(k+1)(k+2)}{n(n-1)}c_{k+2}^{\exchvars} & = & \bar{\tau}_{\be(\exchvars):11}\\
\sum_{k=0}^{n-2}\frac{(n-k-1)(k+1)}{n(n-1)}c_{k+1}^{\exchvars} & = & \bar{\tau}_{\aorb(\exchvars):01}
\end{eqnarray*}
\end{thm}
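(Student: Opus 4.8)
The plan is to exhibit the auxiliary variables $c_k^{\exchvars}$ concretely and then verify the three stated identities by direct counting. Since $\exchvars$ is exchangeable of size $n$, the configuration orbits are indexed by the number of $1$'s, $k=0,\ldots,n$. I would define $c_k^{\exchvars}$ to be the common value of $\tau_\config^{\exchvars}$ on all configurations $\config$ with exactly $k$ ones, multiplied by the number of such configurations, i.e. the total mass that the cluster distribution $\tau^{\exchvars}$ places on the $k$-th configuration orbit. This is the natural lifted cluster variable referred to in the discussion preceding the theorem: there are only $\left|\exchvars\right|+1$ of them, and each is nonnegative because it is a sum of nonnegative $\tau_\config^{\exchvars}$ (which exist by the ground constraint~(\ref{eq:ground-cluster}) applied across all intersection pairs). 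Thus $c_k^{\exchvars}\ge 0$ is immediate once we know a consistent $\tau^{\exchvars}$ exists.

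**Deriving the three identities.**

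With this definition I would start from the ground consistency constraint~(\ref{eq:ground-cluster}) specialized to the three relevant assignment orbits. For the $00$ entry, pick any fixed ordered pair $(u_1,u_2)$ and count, among configurations with exactly $k$ ones, how many assign $0$ to both $u_1$ and $u_2$; by a straightforward hypergeometric count this fraction is $\frac{(n-k)(n-k-1)}{n(n-1)}$. Summing $c_k^{\exchvars}$ weighted by this fraction over $k$ gives $\bar{\tau}_{\be(\exchvars):00}$, since by exchangeability every edge in the orbit has the same pairwise marginal and the ground constraint ties this marginal to the cluster mass. The $11$ identity is the mirror image (fraction $\frac{k(k-1)}{n(n-1)}$, reindexed via $k\mapsto k+2$ to match the stated summation range), and the $01$ identity counts configurations assigning $0$ to $u_1$ and $1$ to $u_2$, giving $\frac{(n-k)k}{n(n-1)}$ (reindexed via $k\mapsto k+1$). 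The reindexings in the theorem statement are purely cosmetic, shifting the summation so every summand's coefficient is manifestly a product of nonnegative terms.

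**The main obstacle and final assembly.**

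I expect the main technical point to be establishing that the single lifted pairwise variable $\bar{\tau}_{\aorb(\exchvars):01}$ correctly aggregates both ordered arcs: because $\exchvars$ is exchangeable, $\tau_{\pair{u_1}0,\pair{u_2}1}=\tau_{\pair{u_1}1,\pair{u_2}0}$, so the arc orbit $\aorb(\exchvars)$ carries a single value and the $01$ and $10$ ground counts coincide, justifying the single constraint rather than two. I would verify this carefully, invoking that the image of $\Aut_\parti$ acting on $\exchvars$ is all of $\symg(\exchvars)$, which in particular contains the transposition swapping $u_1$ and $u_2$. The remaining work is the elementary verification that the hypergeometric fractions sum correctly and that the ground-to-lifted substitution (replacing $\tau_{\pair{u_1}{s_1},\pair{u_2}{s_2}}$ by the appropriate $\bar{\tau}$) is exactly the projection already used in the lifted local relaxation. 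Once the counts and the exchangeability-induced identification of arcs are in place, the three displayed equations follow by collecting terms, completing the proof.
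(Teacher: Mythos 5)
Your proposal is correct and follows essentially the same route as the paper's first proof, which likewise replaces the $2^{n}$ ground cluster variables in Eq.~(\ref{eq:ground-cluster}) by the $n+1$ configuration-orbit variables, counts $\binom{n-2}{k}$ configurations per orbit, and sets $c_{k}^{\exchvars}=\binom{n}{k}\bar{\tau}_{k}^{\exchvars}$---which yields exactly your hypergeometric fractions after this change of variables---while your ``main obstacle'' about the single arc orbit is dispatched there by the one-line remark that the pair $(u_{1},u_{2})$ can always be re-ordered. (The paper also gives a second proof via two evaluations of $\Expt[C(0,0)]$ and its $(1,1)$, $(0,1)$ analogues, which matches your probabilistic reading of $c_{k}^{\exchvars}$ as the mass of the $k$-th configuration orbit.)
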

\begin{proof}
See the supplementary material.
\end{proof}
In contrast to the lifted local and cycle relaxations, the number
of variables and constraints in the lifted exchangeable relaxation
depends linearly on the domain size of the first-order model. From
the lifted local constraints given by \cite{bui13uai}, $\bar{\tau}_{\eorb(\exchvars):00}+\bar{\tau}_{\be(\exchvars):11}+2\bar{\tau}_{\aorb(\exchvars):01}=1$.
Substituting in the expression involved $\tilde{c}_{k}^{\exchvars}$,
we get $\sum_{k=0}^{n}c_{k}^{\exchvars}=1$. Intuitively, $c_{k}^{\exchvars}$
represents the approximation of the marginal probability $\Pr(\sum_{i\in\exchvars}x_{i}=k)$
of having precisely $k$ ones in $\exchvars$. 

As proved by \cite{bui12lifted}, groundings of unary predicates in
Markov Logic Networks (MLNs) gives rise to exchangeable clusters.
Thus, for MLNs, the above theorem immediately suggests a tightening
of the relaxation: for every unary predicate of a MLN, add a new set
of constraints as above to the existing lifted local (or cycle) optimization
problem. Although it is not the focus of our paper, we note that this
should also improve the lifted MAP inference results of \cite{bui13uai}.
For example, in the case of a symmetric complete graphical model,
lifted MAP inference using the linear program given by these new constraints
would find the exact $k$ that maximizes $\Pr(x_{\exchvars})$, hence
recover the same solution as counting elimination. Marginal inference
may still be inexact due to the tree-reweighted entropy approximation.
We re-emphasize that the complexity of variational inference with
lifted exchangeable constraints is guaranteed to be polynomial in
the domain size, unlike exact methods based on lifted counting elimination
and variable elimination.

\section{Experiments}

In this section, we provide an empirical evaluation of our lifted
tree reweighted (LTRW) algorithm. As a baseline we use a dampened
version of the lifted belief propagation (LBP-Dampening) algorithm
from \cite{singla08lifted}. Our lifted algorithm has all of the same
advantages of the tree-reweighted approach over belief propagation,
which we will illustrate in the results: (1) a convex objective that
can be convergently solved to optimality, (2) upper bounds on the
partition function, and (3) the ability to easily improve the approximation
by tightening the relaxation. Our evaluation includes four variants
of the LTRW algorithm corresponding to using different outer bounds:
lifted local polytope (LTRW-L), lifted cycle polytope (LTRW-C), lifted
local polytope with exchangeable polytope constraints (LTRW-LE), and
lifted cycle polytope with exchangeable constraints (LTRW-CE). The
conditional gradient optimization of the lifted TRW objective terminates
when the duality gap is less than $10^{-4}$ or when a maximum number
of $1000$ iterations is reached. To solve the LP problem during conditional
gradient, we use Gurobi%
\footnote{http://www.gurobi.com/%
}. 

We evaluate all the algorithms using several first-order probabilistic
models. We assume that no evidence has been observed, which results
in a large amount of symmetry. Even without evidence, performing marginal
inference in first-order probabilistic models can be very useful for
maximum likelihood learning \cite{jaimovich07uai}. Furthermore, the
fact that our lifted tree-reweighted variational approximation provides
an upper bound on the partition function enables us to maximize a
lower bound on the likelihood \cite{ChenLearning}, which we demonstrate
in Sec. \ref{sub:Application-to-Learning}. To find the lifted orbit
partition, we use the renaming group as in \cite{bui13uai} which
exploits the symmetry of the unobserved contants in the model. 

Rather than optimize over the spanning tree polytope, which is computationally
intensive, most TRW implementations use a single fixed choice of edge
appearance probabilities, e.g. an (un)weighted distribution obtained
using the matrix-tree theorem. In these experiments, we initialize
the lifted edge appearance probabilities $\bar{\rho}$ to be the most
uniform per-orbit edge-appearance probabilties by solving the optimization
problem $\inf_{\bar{\rho}\in\stp_{[\shatter^{E}]}}(\bar{\rho}-\frac{|V|-1}{|E|})^{2}$
using conditional gradient. Each direction-finding step of this conditional
gradient solves a lifted MST problem of the form $\sup_{\bar{\ea}^{'}\in\stp_{[\shatter^{E}]}}\left\langle -2(\bar{\ea}-\frac{|V|-1}{|E|}),\bar{\ea}^{'}\right\rangle $
using our lifted Kruskal's algorithm, where $\bar{\ea}$ is the current
solution. After this initialization, we fix the lifted edge appearance
probabilities and do not attempt to optimize them further.

\subsection{Test models}

\begin{figure}
\begin{centering}
\includegraphics[clip,scale=0.18]{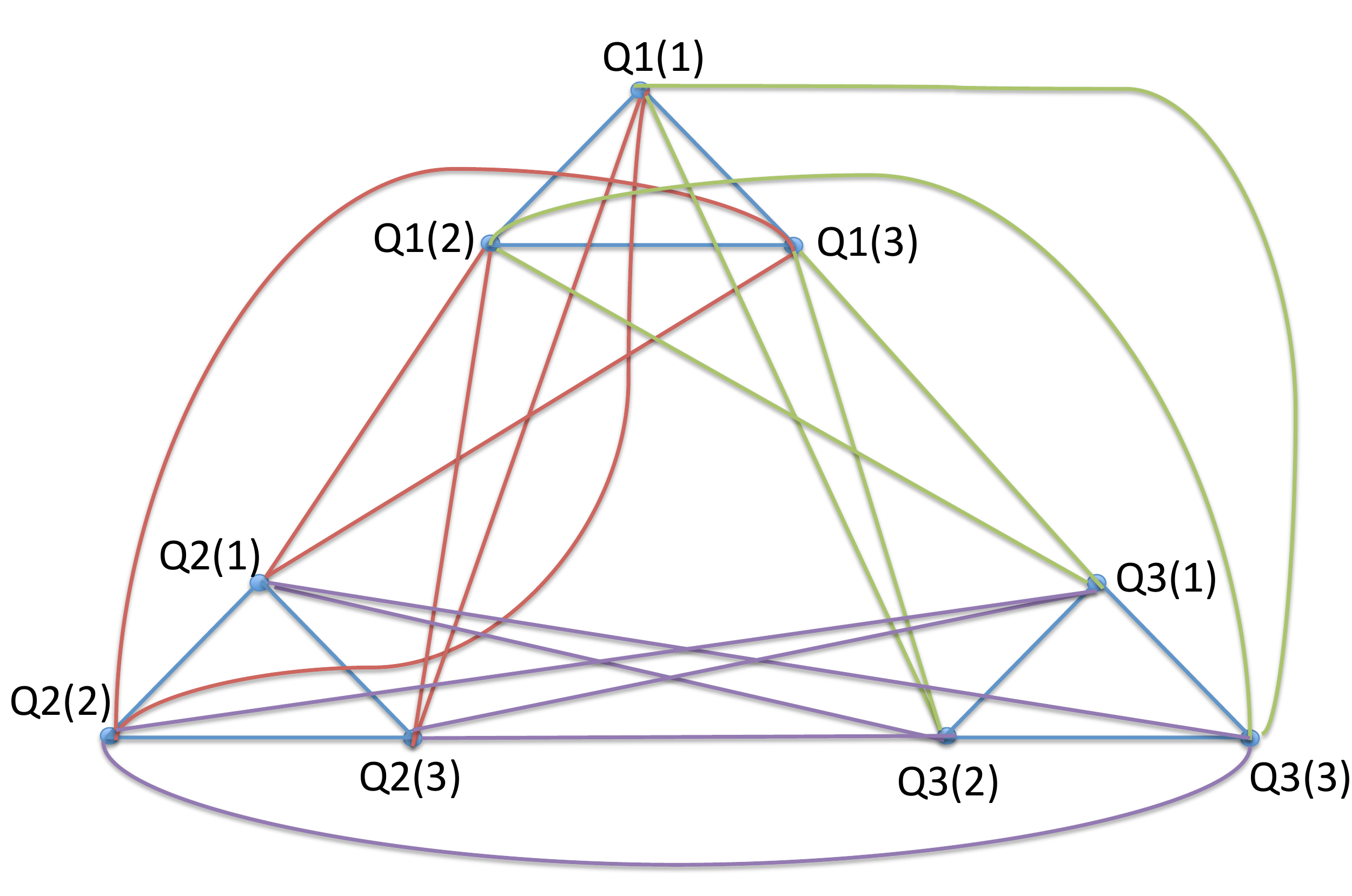}\vspace{-3mm}
\par\end{centering}

\caption{\label{fig:clique-cycle}An example of the ground graphical model
for the Clique-Cycle model (domain size = 3).}
\end{figure}

Fig. \ref{fig:test-models} describes the four test models in MLN
syntax. We focus on the repulsive case, since for attractive models,
all TRW variants and lifted LBP give similar results. The parameter
$W$ denotes the weight that will be varying during the experiments.\emph{
}In all models except \emph{Clique-Cycle}, $W$ acts like the \textquotedblleft{}local
field\textquotedblright{} potential in an Ising model; a negative
(or positive) value of $W$ means the corresponding variable tends
to be in the 0 (or 1) state. \emph{Complete-Graph} is equivalent to
an Ising model on the complete graph of size $n$ (the domain size)
with homogenous parameters. Exact marginals and the log-partition
function can be computed in closed form using lifted counting elimination.
The weight of the interaction clause is set to $-0.1$ (repulsive).
\emph{Friends-Smokers (negated)} is a variant of the Friends-Smokers
model \cite{singla08lifted} where the weight of the final clause
is set to -1.1 (repulsive). We use the method in \cite{bui12lifted}
to compute the exact marginal for the \emph{Cancer} predicate and
the exact value of the log-partition function.\emph{ Lovers-Smokers}
is the same MLN used in \cite{bui13uai} with a full transitive clause
and where we vary the prior of the \emph{Loves} predicate.\emph{ Clique-Cycle}
is a model with 3 cliques and 3 bipartite graphs in between. Its corresponding
ground graphical model is shown in Fig. \ref{fig:clique-cycle}. 

\begin{figure*}
\begin{centering}
\begin{minipage}[t]{0.9\columnwidth}%
\begin{tiny}
Complete Graph \vspace{-2mm}
\begin{eqnarray*}
W & V(x) \\
-0.1 & [x \neq y \wedge (V(x) \Leftrightarrow V(y))] 
\end{eqnarray*}
\end{tiny}%
\end{minipage}%
\begin{minipage}[t]{0.9\columnwidth}%
\begin{tiny}
Friends-Smokers (Negated)\vspace{-2mm}
\begin{eqnarray*}
W & [x \neq y \wedge \neg Friends(x,y)] \\ 
1.4 & \neg Smokes(x) \\ 
2.3 & \neg Cancer(x) \\
1.5 & Smokes(x) \Rightarrow Cancer(x) \\
-1.1 & [x \neq y \wedge Smokes(x) \wedge Friends(x,y) \Rightarrow Smokes(y)]
\end{eqnarray*}
\end{tiny}%
\end{minipage}
\par\end{centering}

\begin{centering}
\begin{minipage}[t]{0.9\columnwidth}%
\begin{tiny}
Lovers-Smokers\vspace{-2mm}
\begin{eqnarray*}
W & [x \neq y \wedge Loves(x,y)] \\
100 & Male(x) \Leftrightarrow !Female(x) \\
2 & Male(x) \wedge Smokes(x) \\
1 & Female(x) \wedge Smokes(x) \\
0.5 & [x \neq y \wedge Male(x) \wedge Female(y) \wedge Loves(x,y)] \\
1 & [x \neq y \wedge Loves(x,y) \wedge (Smokes(x) \Leftrightarrow Smokes(y))] \\
-100 & [x \neq y \wedge y \neq z \wedge z\neq x \wedge Loves(x,y) \wedge Loves(y,z) \wedge Loves(x,z)] 
\end{eqnarray*}
\end{tiny}%
\end{minipage}%
\begin{minipage}[t]{0.9\columnwidth}%
\begin{tiny}
\vspace{2mm}Clique-Cycle\vspace{-2mm}
\begin{eqnarray*} 
W & x \neq y \wedge (Q1(x) \Leftrightarrow \neg Q2(y))\\ 
W & x \neq y \wedge (Q2(x) \Leftrightarrow \neg Q3(y))\\ 
W & x \neq y \wedge (Q3(x) \Leftrightarrow \neg Q1(y)) \\
-W & x \neq y \wedge (Q1(x) \Leftrightarrow Q1(y)) \\ 
-W & x \neq y \wedge (Q2(x)  \Leftrightarrow Q2(y)) \\ 
-W & x \neq y \wedge (Q3(x) \Leftrightarrow Q3(y)) 
\end{eqnarray*}
\end{tiny}%
\end{minipage}
\par\end{centering}

\caption{\label{fig:test-models}Test models}
\end{figure*}

\emph{}

\subsection{Accuracy of Marginals}

Fig. \ref{fig:marginal-accuracy} shows the marginals computed by
all the algorithms as well as exact marginals on the Complete-Graph
and Friends-Smokers models. We do not know how to efficiently perform
exact inference in the remaining two models, and thus do not measure
accuracy for them. The result on complete graphs illustrates the clear
benefit of tightening the relaxation: LTRW-Local and LBP are inaccurate
for moderate $W$, whereas cycle constraints and, especially, exchangeable
constraints drastically improve accuracy. As discussed earlier, for
the case of symmetric complete graphical models, the exchangeable
constraints suffice to exactly characterize the marginal polytope.
As a result, the approximate marginals computed by LTRW-LE and LTRW-CE
are almost the same as the exact marginals; the very small difference
is due to the entropy approximation. On the Friends-Smokers (negated)
model, all LTRW variants give accurate marginals while lifted LBP
even with very strong dampening ($0.9$ weight given to previous iterations'
messages) fails to converge for $W<2$. We observed that LTRW-LE gives
the best trade-off between accuracy and running time for this model.
Note that we do not compare to ground versions of the lifted TRW algorithms
because, by Theorem \ref{thm:lifted-trw-equivalent}, the marginals
and log-partition function are the same for both. 

\begin{figure}
\begin{centering}
\vspace{-2mm}\includegraphics[scale=0.25]{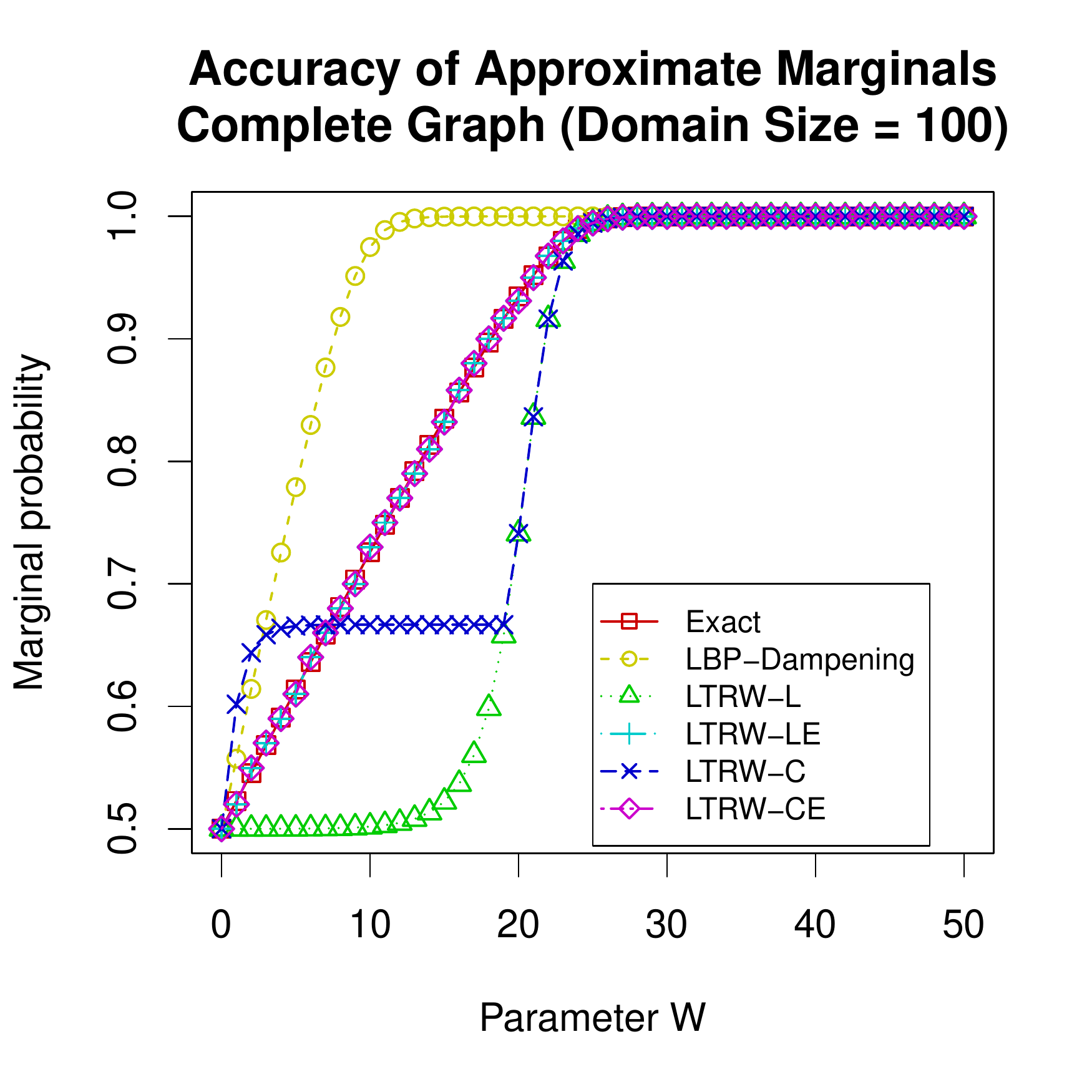}\includegraphics[scale=0.25]{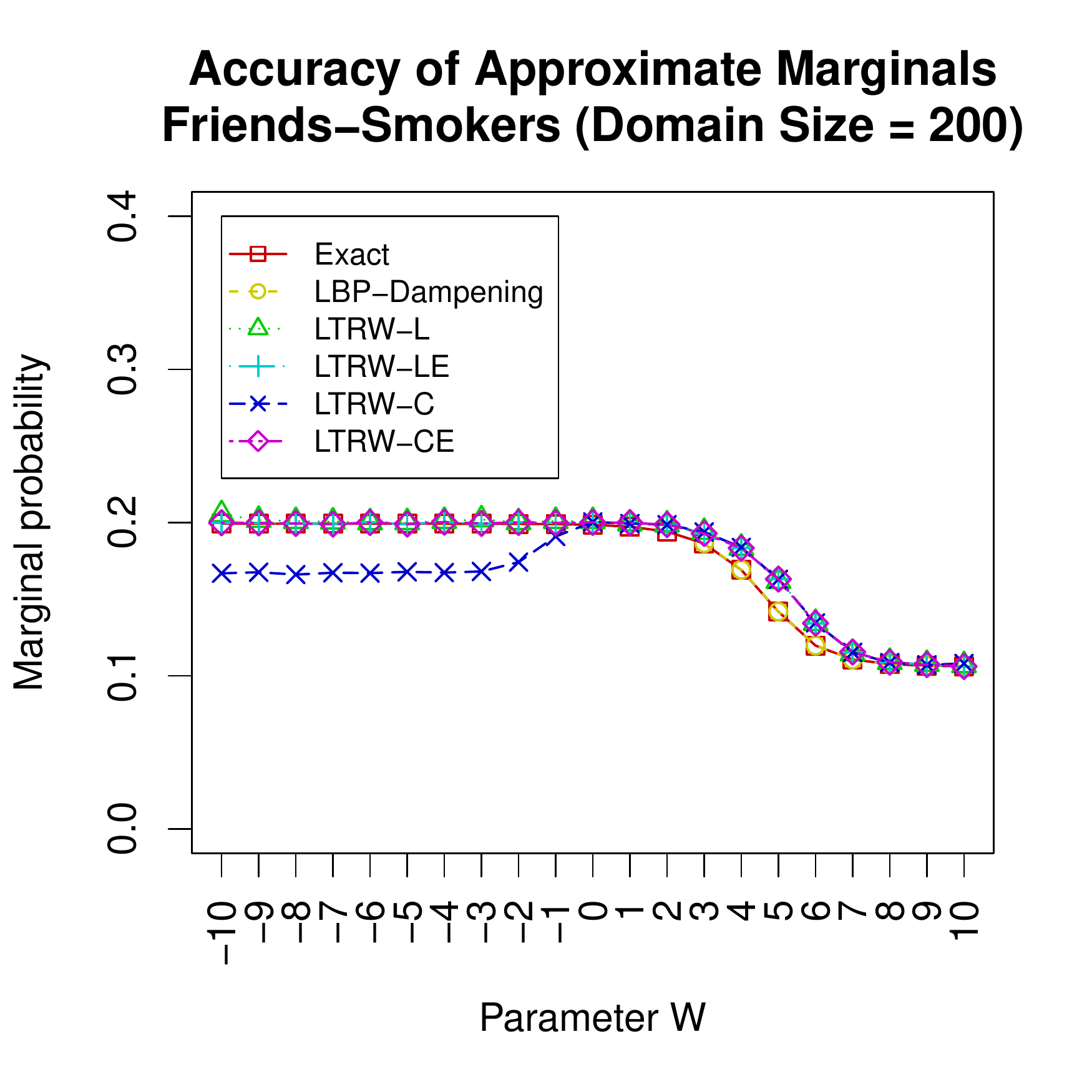}\vspace{-4mm}\caption{\label{fig:marginal-accuracy}Left: marginal accuracy for complete
graph model. Right: marginal accuracy for $Pr(Cancer(x))$ in Friends-Smokers
(neg). Lifted TRW variants using different outer bounds: L=local,
C=cycle, LE=local+exchangeable, CE=cycle+exchangeable (best viewed
in color).}

\par\end{centering}

\end{figure}

\subsection{Quality of Log-Partition Upper bounds}

Fig. \ref{fig:bound-quality} plots the values of the upper bounds
obtained by the LTRW algorithms on the four test models. The results
clearly show the benefits of adding each type of constraint to the
LTRW, with the best upper bound obtained by tightening the lifted
local polytope with both lifted cycle and exchangeable constraints.
For the Complete-Graph and Friends-Smokers model, the log-partition
approximation using exchangeable polytope constraints is very close
to exact. In addition, we illustrate lifted LBP's\emph{ }approximation
of the log-partition function on the Complete-Graph (note it is non-convex
and not an upper bound).\emph{}

\begin{figure*}
\begin{centering}
\includegraphics[scale=0.25]{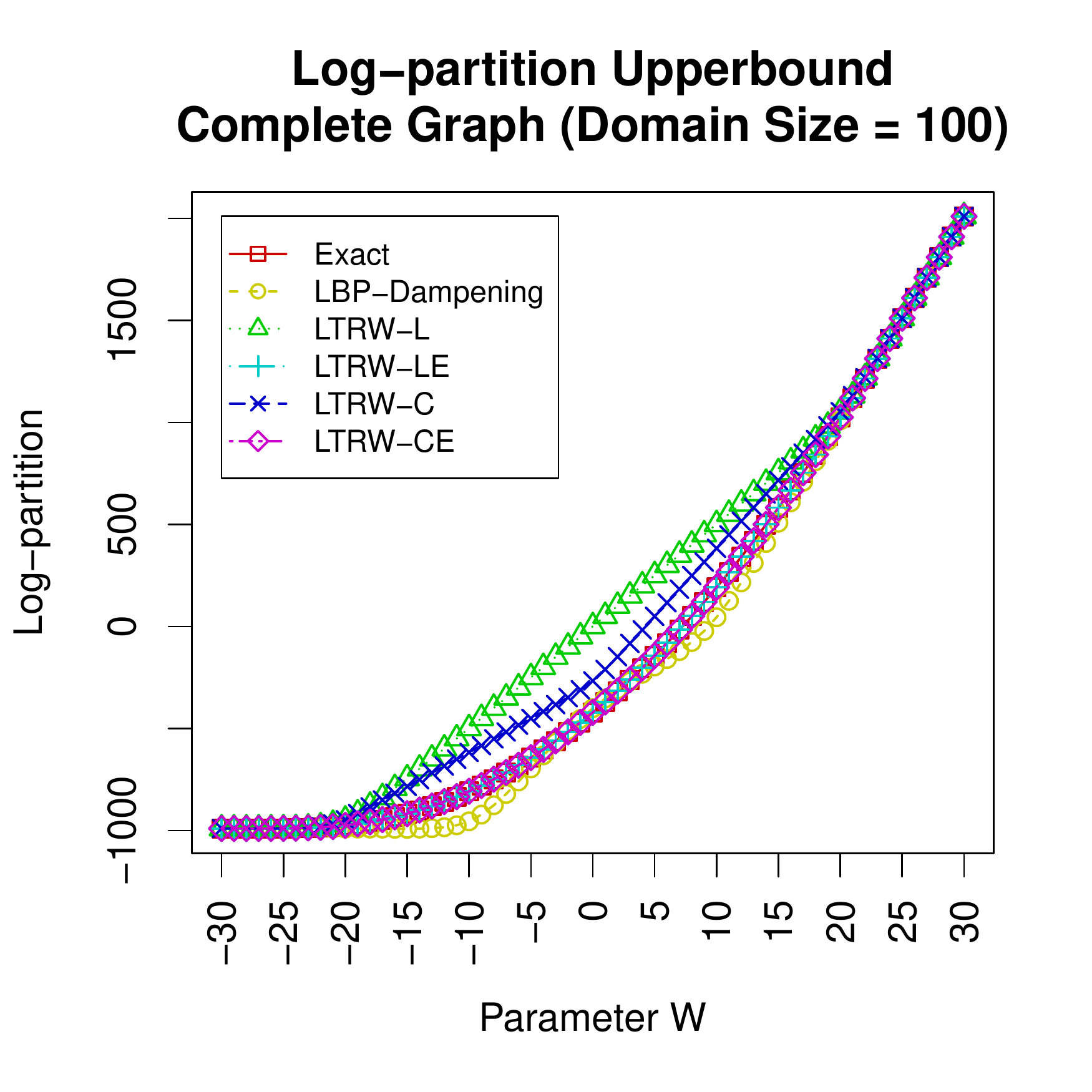}\includegraphics[scale=0.25]{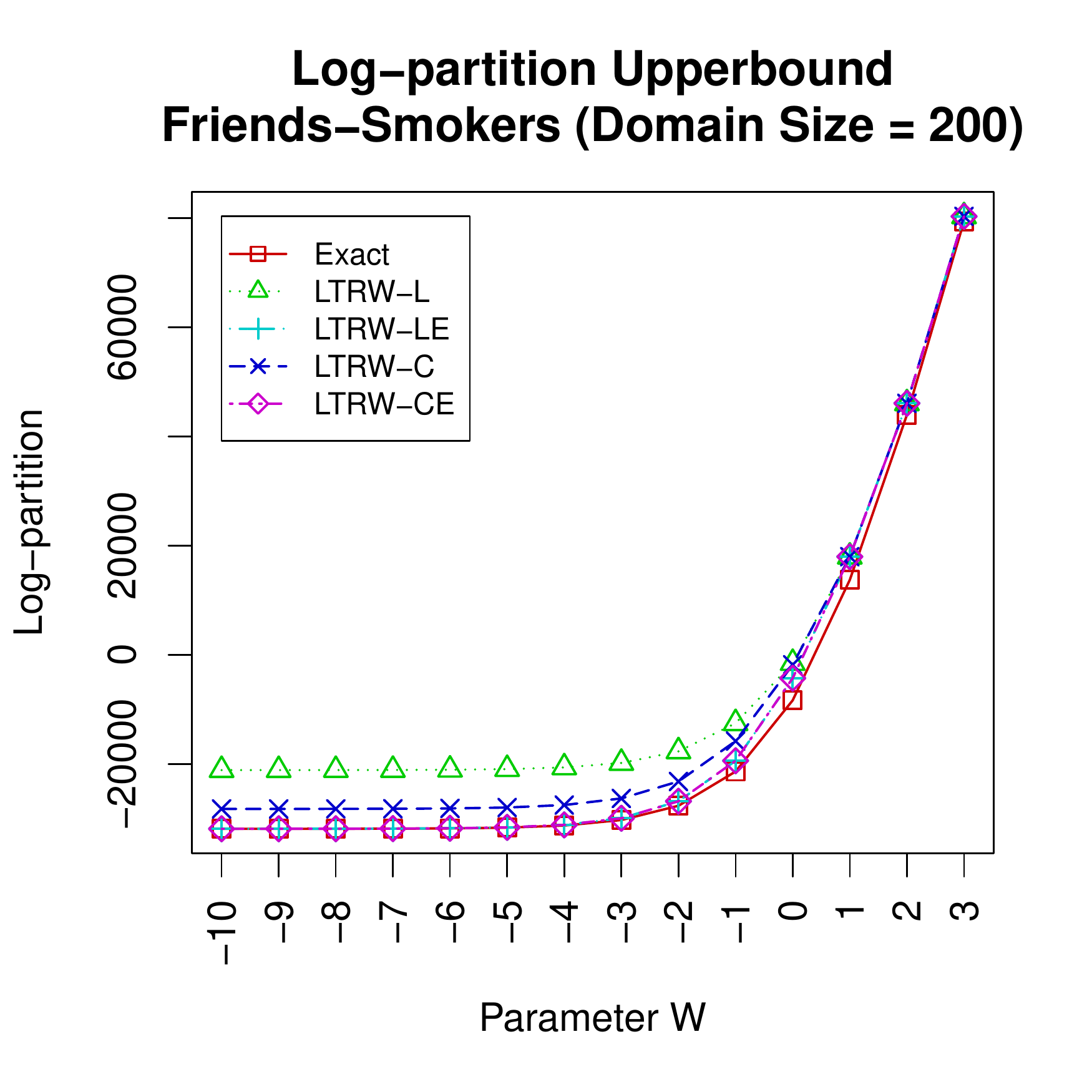}\includegraphics[scale=0.25]{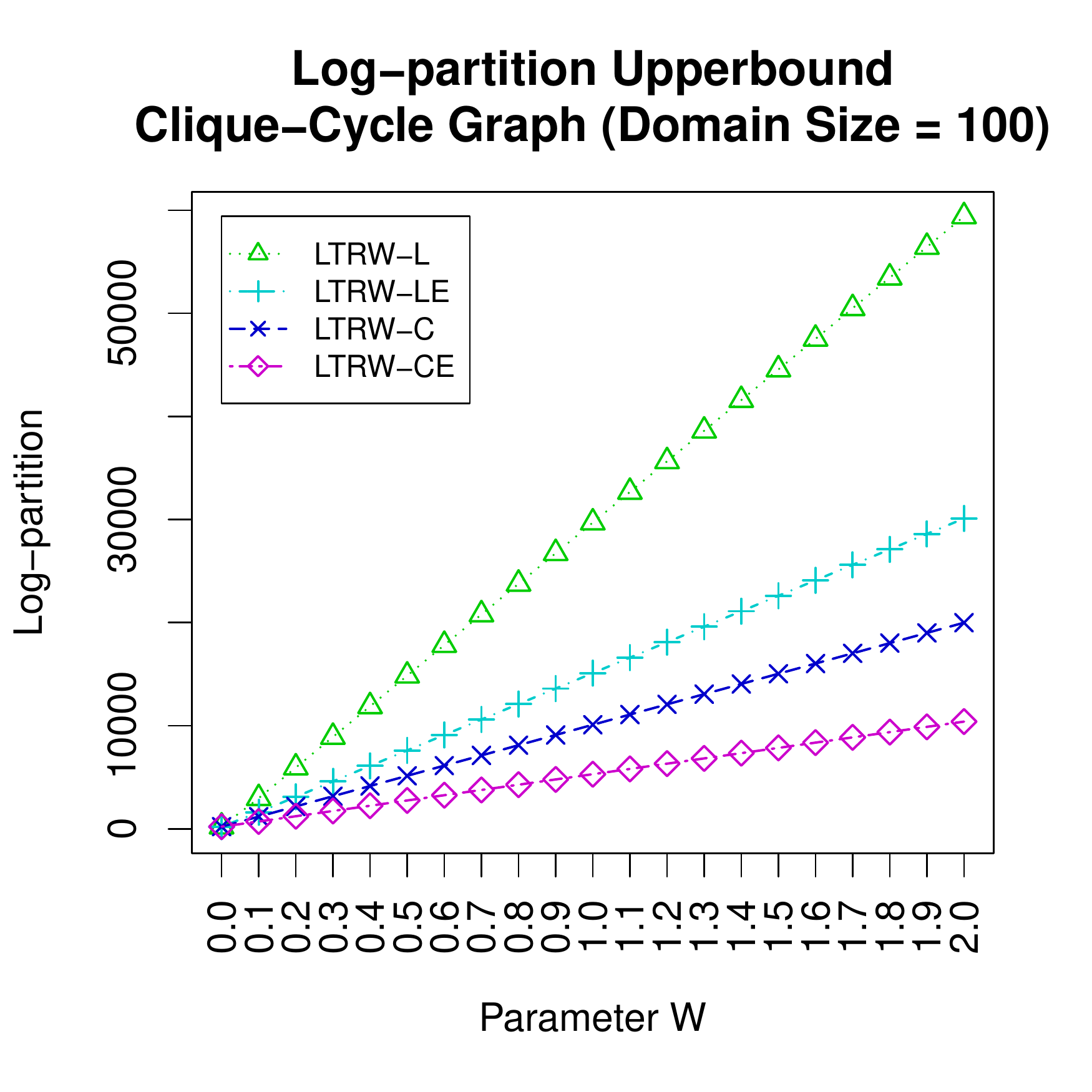}\includegraphics[scale=0.25]{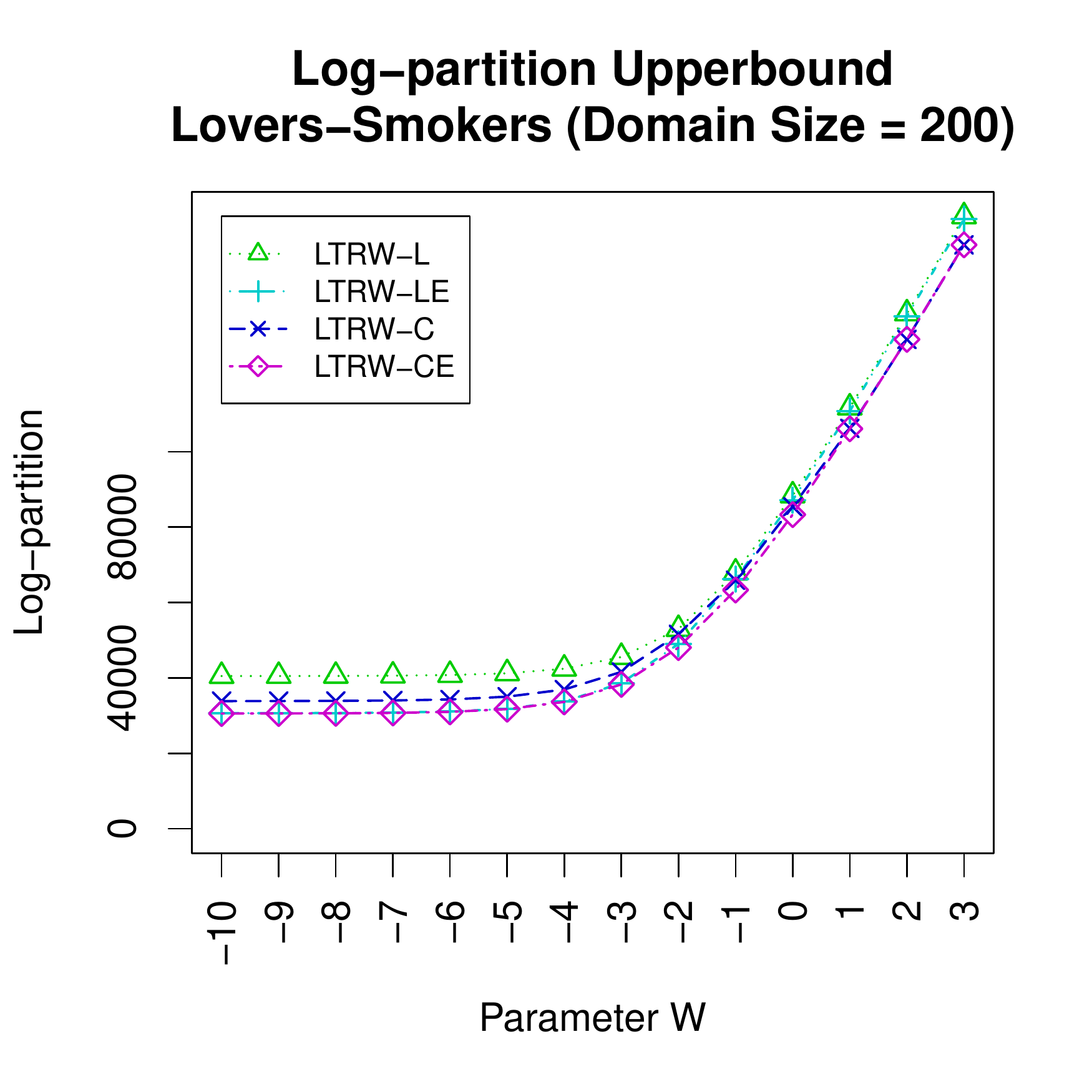}\vspace{-5mm}
\par\end{centering}

\caption{\label{fig:bound-quality}Approximations of the log-partition function
on the four test models from Fig. \ref{fig:test-models} (best viewed
in color).}
\end{figure*}

\subsection{Running time\label{sub:Running-time}}

As shown in Table \ref{tab:runtime}, lifted variants of TRW are order-of-magnitudes
faster than the ground version. Interestingly, lifted TRW with local
constraints is observed to be faster as the domain size increase;
this is probably due to the fact that as the domain size increases,
the distribution becomes more peak, so marginal inference becomes
more similar to MAP inference. Lifted TRW with local and exchangeable
constraints requires a smaller number of conditional gradient iterations,
thus is faster; however note that its running time slowly increases
since the exchangeable constraint set grows linearly with domain size.\emph{}

LBP\textquoteright{}s lack of convergence makes it difficult to have
a meaningful timing comparison with LBP. For example, LBP did not
converge for about half of the values of $W$ in the \emph{Lovers-Smokers}
model, even after using very strong dampening. We did observe that
when LBP converges, it is much faster than LTRW. We hypothesize that
this is due to the message passing nature of LBP, which is based on
a fixed point update whereas our algorithm is based on Frank-Wolfe.

\begin{table}
\begin{centering}
{\small%
\begin{tabular}{|c|c|c|c|c|c|}
\hline 
{\scriptsize Domain size} & {\scriptsize 10} & {\scriptsize 20} & {\scriptsize 30} & {\scriptsize 100} & {\scriptsize 200}\tabularnewline
\hline 
{\scriptsize TRW-L} & {\scriptsize 138370} & {\scriptsize 609502} & {\scriptsize 1525140} & - & -\tabularnewline
\hline 
{\scriptsize LTRW-L} & {\scriptsize 3255} & {\scriptsize 3581} & {\scriptsize 3438} & {\scriptsize 1626} & {\scriptsize 1416}\tabularnewline
\hline 
{\scriptsize LTRW-LE} & {\scriptsize 681} & {\scriptsize 703} & {\scriptsize 721} & {\scriptsize 1033} & {\scriptsize 1307}\tabularnewline
\hline 
\end{tabular}}\vspace{-2mm}
\par\end{centering}

\caption{\label{tab:runtime}Ground vs lifted TRW runtime on Complete-Graph
(milliseconds)}
\end{table}

\subsection{Application to Learning\label{sub:Application-to-Learning}}

We now describe an application of our algorithm to the task of learning
relational Markov networks for inferring protein-protein interactions
from noisy, high-throughput, experimental assays \cite{jaimovich2006towards}.
This is equivalent to learning the parameters of an exponential family
random graph model \cite{robins2007introduction} where the edges
in the random graph represents the protein-protein interactions. Despite
fully observed data, maximum likelihood learning is challenging because
of the intractability of computing the log-partition function and
its gradient. In particular, this relational Markov network has over
330K random variables (one for each possible interaction of 813 variables)
and tertiary potentials. However, Jaimovich et al. \cite{jaimovich07uai}
observed that the partition function in relational Markov networks
is highly symmetric, and use lifted LBP to efficiently perform approximate
learning in running time that is independent of the domain size. They
use their lifted inference algorithm to visualize the (approximate)
likelihood landscape for different values of the parameters, which
among other uses characterizes the robustness of the model to parameter
changes. 

We use precisely the same procedure as \cite{jaimovich07uai}, substituting
lifted BP with our new lifted TRW algorithms. The model has three
parameters: $\theta_1$, used in the single-node potential to specify
the prior probability of a protein-protein interaction; $\theta_{111}$,
part of the tertiary potentials which encourages cliques of three
interacting proteins; and $\theta_{011}$, also part of the tertiary
potentials which encourages chain-like structures where proteins $A,B$
interact, $B,C$ interact, but $A$ and $C$ do not (see supplementary
material for the full model specification as an MLN). We follow their
two-step estimation procedure, first estimating $\theta_1$ in the
absence of the other parameters (the maximum likelihood, BP, and TRW
estimates of this parameter coincide, and estimation can be performed
in closed-form: $\ensuremath{\theta_{1}^{*}}=-5.293$). Next, for
each setting of $\theta_{111}$ and $\theta_{011}$ we estimate the
log-partition function using lifted TRW with the cycle+exchangeable
vs. local constraints only. Since TRW is an upper bound on the log-partition
function, these provide lower bounds on the likelihood.

Our results are shown in Fig. \ref{fig:ppi}, and should be compared
to Fig. 7 of \cite{jaimovich07uai}. The overall shape of the likelihood
landscapes are similar. However, the lifted LBP estimates of the likelihood
have several local optima, which cause gradient-based learning with
lifted LBP to reach different solutions depending on the initial setting
of the parameters. In contrast, since TRW is convex, any gradient-based
procedure would reach the global optima, and thus learning is much
easier. Interestingly, we see that our estimates of the likelihood
have a significantly smaller range over these parameter settings than
that estimated by lifted LBP. Moreover, the high-likelihood parameter
settings extends to larger values of $\theta_{111}$. For all algorithms
there is a sudden decrease in the likelihood at $\theta_{011}>0$
(not shown in the figure).

\begin{figure}
\begin{centering}
\includegraphics[scale=0.23]{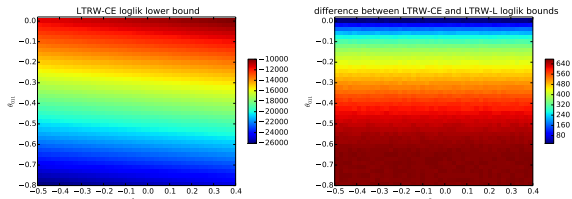}
\par\end{centering}

\begin{centering}
\vspace{-3mm}
\par\end{centering}

\caption{\label{fig:ppi}Log-likelihood lower-bound obtained using lifted TRW
with the cycle and exchangeable constraints (CE) for the same protein-protein
interaction data used in \cite{jaimovich07uai} (left) (c.f. Fig.
7 in \cite{jaimovich07uai}). Improvement in lower-bound after tightening
the local constraints (L) with CE (right).}
\vspace{-2.5mm}
\end{figure}

\section{Discussion and Conclusion}

Lifting partitions used by lifted and counting BP \cite{singla08lifted,kersting09counting}
can be coarser than orbit partitions. In graph-theoretic terms, these
partitions are called \emph{equitable }partitions. If each equitable
partition cell is thought of as a distinct node color, then among
nodes with the same color, their neighbors must have the same color
histogram. It is known that orbit partitions are always equitable,
however the converse is not always true \cite{godsil01agt}.

Since equitable partition can be computed more efficiently and potentially
leads to more compact lifted problems, the following question naturally
arises: can we use equitable partition in lifting the TRW problem?
Unfortunately, a complete answer is non-trivial. We point out here
a theoretical barrier due to the interplay between the spanning tree
polytope and the equitable partition of a graph.

Let $\varepsilon$ be the coarsest equitable partition of edges of
$\gm$. We give an example graph in the supplementary material (see
example \ref{example:equitable-stp}) where the symmetrized spanning
tree polytope corresponding to the equitable partition $\varepsilon$,
$\symsub{\stp}{\epsilon}=\stp(\gm)\cap\Real_{[\varepsilon]}^{|E|}$
is an empty set. When $\symsub{\stp}{\epsilon}$ is empty, the consequence
is that if we want $\ea$ to be within $\mathbb{T}$ so that $\Aapprox(.,\ea)$
is guaranteed to be a convex upper bound of the log-partition function,
we cannot restrict $\ea$ to be consistent with the equitable partition.
In lifted and counting BP, $\ea\equiv1$ so it is clearly consistent
with the equitable partition; however, one loses convexity and upper
bound guarantee as a result. This suggests that there might be a trade-off
between the compactness of the lifting partition and the quality of
the entropy approximation, a topic deserving the attention of future
work.

In summary, we presented a formalization of lifted marginal inference
as a convex optimization problem and showed that it can be efficiently
solved using a Frank-Wolfe algorithm. Compared to previous lifted
variational inference algorithms, in particular lifted belief propagation,
our approach comes with convergence guarantees, upper bounds on the
partition function, and the ability to improve the approximation (e.g.
by introducing additional constraints) at the cost of small additional
running time. 

A limitation of our lifting method is that as the amount of soft evidence
(the number of distinct individual objects) approaches the domain
size, the behavior of lifted inference approaches ground inference.
The wide difference in running time between ground and lifted inference
suggests that significant efficiency can be gained by solving an approximation
of the orignal problem that is more symmetric \cite{van2013complexity,kersting10aaai,singla2010approximate,braz2009anytime}.
One of the most interesting open questions raised by our work is how
to use the variational formulation to perform approxiate lifting.
Since our lifted TRW algorithm provides an upper bound on the partition
function, it is possible that one could use the upper bound to guide
the choice of approximation when deciding how to re-introduce symmetry
into an inference task.

\textbf{Acknowledgements}: Work by DS supported by DARPA PPAML program
under AFRL contract no. FA8750-14-C-0005.

{\small\bibliographystyle{plainnat}
\bibliography{riedel,sontag_main,sontag_paper}

\begin{thebibliography}{29}
\providecommand{\natexlab}[1]{#1}
\providecommand{\url}[1]{\texttt{#1}}
\expandafter\ifx\csname urlstyle\endcsname\relax
  \providecommand{\doi}[1]{doi: #1}\else
  \providecommand{\doi}{doi: \begingroup \urlstyle{rm}\Url}\fi

\bibitem[Barahona and Mahjoub(1986)]{barahonacut}
F.~Barahona and A.~R. Mahjoub.
\newblock On the cut polytope.
\newblock \emph{Mathematical Programming}, 36:\penalty0 157--173, 1986.

\bibitem[Bui et~al.(2012)Bui, Huynh, and de~Salvo~Braz]{bui12lifted}
Hung~Hai Bui, Tuyen~N. Huynh, and Rodrigo de~Salvo~Braz.
\newblock Lifted inference with distinct soft evidence on every object.
\newblock In \emph{AAAI-2012}, 2012.

\bibitem[Bui et~al.(2013)Bui, Huynh, and Riedel]{bui13uai}
Hung~Hai Bui, Tuyen~N. Huynh, and Sebastian Riedel.
\newblock Automorphism groups of graphical models and lifted variational
  inference.
\newblock In \emph{Proceedings of the Twenty-Ninth Conference on Uncertainty in
  Artificial Intelligence, UAI-2013}. AUAI Press, 2013.

\bibitem[Bui et~al.(2014)Bui, Huynh, and Sontag]{bui2014liftedtrw}
Hung~Hai Bui, Tuyen~N. Huynh, and David Sontag.
\newblock Lifted tree-reweighted variational inference.
\newblock \emph{arXiv pre-print}, 2014.
\newblock http://arxiv.org/abs/1406.4200.

\bibitem[de~Salvo~Braz et~al.(2005)de~Salvo~Braz, Amir, and Roth]{braz05lifted}
R.~de~Salvo~Braz, E.~Amir, and D.~Roth.
\newblock Lifted first-order probabilistic inference.
\newblock In \emph{Proceedings of the 19th International Joint Conference on
  Artificial Intelligence (IJCAI '05)}, pages 1319--1125, 2005.

\bibitem[de~Salvo~Braz et~al.(2009)de~Salvo~Braz, Natarajan, Bui, Shavlik, and
  Russell]{braz2009anytime}
Rodrigo de~Salvo~Braz, Sriraam Natarajan, Hung Bui, Jude Shavlik, and Stuart
  Russell.
\newblock Anytime lifted belief propagation.
\newblock In \emph{6th International Workshop on Statistical Relational
  Learning (SRL 2009)}, 2009.

\bibitem[Frank and Wolfe(1956)]{frank-wolfe56}
Marguerite Frank and Philip Wolfe.
\newblock An algorithm for quadratic programming.
\newblock \emph{Naval Research Logistics Quarterly}, 3\penalty0 (1-2):\penalty0
  95--110, 1956.
\newblock ISSN 1931-9193.

\bibitem[Globerson and Jaakkola(2007)]{GJ07}
A.~Globerson and T.~Jaakkola.
\newblock {C}onvergent {P}ropagation {A}lgorithms via {O}riented {T}rees.
\newblock In \emph{Uncertainty in Artificial Intelligence}, 2007.

\bibitem[Godsil and Royle(2001)]{godsil01agt}
Chris Godsil and Gordon Royle.
\newblock \emph{{Algebraic Graph Theory}}.
\newblock Springer, 2001.

\bibitem[Gogate and Domingos(2011)]{gogate11probabilistic}
Vibhav Gogate and Pedro Domingos.
\newblock Probabilistic theorem proving.
\newblock In \emph{Proceedings of the Twenty-Seventh Annual Conference on
  Uncertainty in Artificial Intelligence (UAI-11)}, pages 256--265, 2011.

\bibitem[Jaggi(2013)]{ICML2013_jaggi13}
Martin Jaggi.
\newblock Revisiting {Frank-Wolfe}: Projection-free sparse convex optimization.
\newblock In \emph{Proceedings of the 30th ICML}, volume~28, pages 427--435.
  JMLR Workshop and Conference Proceedings, 2013.

\bibitem[Jaimovich et~al.(2006)Jaimovich, Elidan, Margalit, and
  Friedman]{jaimovich2006towards}
Ariel Jaimovich, Gal Elidan, Hanah Margalit, and Nir Friedman.
\newblock Towards an integrated protein-protein interaction network: a
  relational markov network approach.
\newblock \emph{Journal of Computational Biology}, 13\penalty0 (2):\penalty0
  145--164, 2006.

\bibitem[Jaimovich et~al.(2007)Jaimovich, Meshi, and Friedman]{jaimovich07uai}
Ariel Jaimovich, Ofer Meshi, and Nir Friedman.
\newblock Template based inference in symmetric relational markov random
  fields.
\newblock In \emph{Proceedings of the Twenty-Third Conference on Uncertainty in
  Artificial Intelligence, Vancouver, BC, Canada, July 19-22, 2007}, pages
  191--199. AUAI Press, 2007.

\bibitem[Jancsary and Matz(2011)]{JancsaryM11}
Jeremy Jancsary and Gerald Matz.
\newblock Convergent decomposition solvers for tree-reweighted free energies.
\newblock \emph{Journal of Machine Learning Research - Proceedings Track},
  15:\penalty0 388--398, 2011.

\bibitem[Kersting et~al.(2010)Kersting, Massaoudi, Ahmadi, and
  Hadiji]{kersting10aaai}
K.~Kersting, Y.~El Massaoudi, B.~Ahmadi, and F.~Hadiji.
\newblock Informed lifting for message{--}passing.
\newblock In D.~Poole M.~Fox, editor, \emph{Twenty{--}Fourth AAAI Conference on
  Artificial Intelligence (AAAI{--}10)}, Atlanta, USA, July 11 {--} 15 2010.
  AAAI Press.

\bibitem[Kersting et~al.(2009)Kersting, Ahmadi, and
  Natarajan]{kersting09counting}
Kristian Kersting, Babak Ahmadi, and Sriraam Natarajan.
\newblock Counting belief propagation.
\newblock In \emph{Proceedings of the 25th Annual Conference on Uncertainty in
  AI (UAI '09)}, 2009.

\bibitem[Milch et~al.(2008)Milch, Zettlemoyer, Kersting, Haimes, and
  Kaelbling]{milch&al08}
B.~Milch, L.~S. Zettlemoyer, K.~Kersting, M.~Haimes, and L.~P. Kaelbling.
\newblock {Lifted Probabilistic Inference with Counting Formulas}.
\newblock In \emph{Proceedings of the 23rd AAAI Conference on Artificial
  Intelligence (AAAI '08)}, pages 1062--1068, 2008.

\bibitem[Niepert(2012)]{niepert12uai}
Mathias Niepert.
\newblock Markov chains on orbits of permutation groups.
\newblock In \emph{UAI-2012}, 2012.

\bibitem[Robins et~al.(2007)Robins, Pattison, Kalish, and
  Lusher]{robins2007introduction}
Garry Robins, Pip Pattison, Yuval Kalish, and Dean Lusher.
\newblock An introduction to exponential random graph ($p^*$) models for social
  networks.
\newblock \emph{Social networks}, 29\penalty0 (2):\penalty0 173--191, 2007.

\bibitem[Sherali and Adams(1990)]{sherali_adams90}
H.~D. Sherali and W.~P. Adams.
\newblock A hierarchy of relaxations between the continuous and convex hull
  representations for zero-one programming problems.
\newblock \emph{SIAM Journal on Discrete Mathematics}, 3\penalty0 (3):\penalty0
  411--430, 1990.
\newblock \doi{10.1137/0403036}.
\newblock URL \url{http://link.aip.org/link/?SJD/3/411/1}.

\bibitem[Singla and Domingos(2008)]{singla08lifted}
Parag Singla and Pedro Domingos.
\newblock Lifted first-order belief propagation.
\newblock In \emph{Proceedings of the 23rd AAAI Conference on Artificial
  Intelligence (AAAI '08)}, pages 1094--1099, 2008.

\bibitem[Singla et~al.(2010)Singla, Nath, and Domingos]{singla2010approximate}
Parag Singla, Aniruddh Nath, and Pedro Domingos.
\newblock Approximate lifted belief propagation.
\newblock In \emph{Workshop on Statistical Relational Artificial Intelligence
  (StaR-AI 2010)}, 2010.

\bibitem[Sontag and Jaakkola(2008)]{sontag}
D.~Sontag and T.~Jaakkola.
\newblock New outer bounds on the marginal polytope.
\newblock In \emph{Advances in Neural Information Processing Systems 21}. MIT
  Press, 2008.

\bibitem[Sontag et~al.(2008)Sontag, Meltzer, Globerson, Jaakkola, and
  Weiss]{sontag08tightening}
David Sontag, T.~Meltzer, A.~Globerson, T.~Jaakkola, and Y.~Weiss.
\newblock {{Tightening LP relaxations for MAP using message passing}}.
\newblock In \emph{Proceedings of the 24th Annual Conference on Uncertainty in
  AI (UAI '08)}, 2008.

\bibitem[Van~den Broeck and Darwiche(2013)]{van2013complexity}
Guy Van~den Broeck and Adnan Darwiche.
\newblock On the complexity and approximation of binary evidence in lifted
  inference.
\newblock In \emph{Advances in Neural Information Processing Systems}, pages
  2868--2876, 2013.

\bibitem[Wainwright et~al.(2005)Wainwright, Jaakkola, and Willsky]{martinUpper}
M.~Wainwright, T.~Jaakkola, and A.~Willsky.
\newblock A new class of upper bounds on the log partition function.
\newblock \emph{IEEE Transactions on Information Theory}, 51:\penalty0
  2313--2335, 2005.

\bibitem[Wainwright and Jordan(2008)]{wainwright2008graphical}
Martin Wainwright and Michael Jordan.
\newblock \emph{{Graphical Models, Exponential Families, and Variational
  Inference.}}
\newblock Now Publishers, 2008.

\bibitem[Wainwright et~al.(2003)Wainwright, Jaakkola, and
  Willsky]{wainwright03treebased}
Martin~J. Wainwright, Tommi Jaakkola, and Alan~S. Willsky.
\newblock Tree-based reparameterization framework for analysis of sum-product
  and related algorithms.
\newblock \emph{IEEE Transactions on Information Theory}, 49:\penalty0
  1120--1146, 2003.

\bibitem[Yanover et~al.(2008)Yanover, Schueler-Furman, and Weiss]{ChenLearning}
C.~Yanover, O.~Schueler-Furman, and Y.~Weiss.
\newblock Minimizing and learning energy functions for side-chain prediction.
\newblock \emph{Journal of Computational Biology}, 15\penalty0 (7):\penalty0
  899--911, 2008.

\end{thebibliography}
}

\cleardoublepage{}
\section*{Supplementary Materials for ``Lifted Tree-Reweighted Variational
Inference''}

We present (1) proofs not given in the main paper, (2) full pseudo-code
for the lifted Kruskal's algorithm for finding maximum spanning tree
in symmetric graphs, and (3) additional details of the protein-protein
interaction model.

\textbf{Proof of Theorem \ref{thm:equivalent}}.
\begin{proof}
The lifting group $\Aut_{\parti}$ stabilizes both the objective function
and the constraints of the convex optimization problem in the LHS
of Eq. (\ref{eq:equivalent}). The equality is then established using
Lemma 1 in \cite{bui13uai}.
\end{proof}
We state and prove a lemma about the symmetry of the bounds $B$ and
$B^{*}$ that will be used in subsequent proofs.
\begin{lem}
\label{lem:symmetry-of-B}let $\naut$ be an automorphism of the graphical
model $\gm$, then $\negEntApprox(\tau,\ea)=\negEntApprox(\tau^{\pi},\ea^{\pi})$
and $B(\theta,\ea)=B(\theta^{\pi},\ea^{\pi})$.\end{lem}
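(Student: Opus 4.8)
The plan is to establish both identities by a single change-of-variables argument that exploits the fact that a graph automorphism acts on $\tau$, $\ea$ and $\para$ merely by relabeling coordinates, while preserving the combinatorial data (the vertex set, the edge set, and their incidence) on which $\negEntApprox$ and $\Aapprox$ actually depend. The two claims are then both instances of ``the objective is unchanged when the same structure-preserving permutation is applied to all of its arguments.''

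For the first identity, I would start from the additive form $-\negEntApprox(\tau,\ea)=\sum_{s\in V(\gm)}H(\tau_{s})-\sum_{e\in E(\gm)}I(\tau_{e})\ea_{e}$. Since $\naut$ is an automorphism of $\gm$, it restricts to a bijection of $V(\gm)$ and a bijection of $E(\gm)$ that preserves incidence. Hence applying $\naut$ sends the node term to $H(\tau_{\naut^{-1}(s)})$ and the edge term to $I(\tau_{\naut^{-1}(e)})\,\ea_{\naut^{-1}(e)}$, and re-indexing each sum over the corresponding bijection leaves both sums unchanged, giving $\negEntApprox(\tau^{\naut},\ea^{\naut})=\negEntApprox(\tau,\ea)$. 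The one point that needs care is that the action also permutes the local state configurations at each node and edge (through the associated feature permutation), so $(\tau^{\naut})_{s}$ is $\tau_{\naut^{-1}(s)}$ only up to a relabeling of its coordinates; but $H$ and $I$ depend only on the multiset of marginal probabilities and are invariant under any relabeling of the state space, so these internal permutations do not change the per-term values.

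For the second identity I would substitute $\tau=\sigma^{\naut}$ into $\Aapprox(\para^{\naut},\ea^{\naut})=\sup_{\tau\in\outbound(\gm)}\left\langle \tau,\para^{\naut}\right\rangle -\negEntApprox(\tau,\ea^{\naut})$. Because $\naut$ stabilizes the outer bound $\outbound(\gm)$ (the local and cycle polytopes are defined purely from the graph structure and so are mapped onto themselves by a graph automorphism, consistent with the fact noted earlier that the automorphism group stabilizes $\Mean$), the constraint $\sigma^{\naut}\in\outbound(\gm)$ is equivalent to $\sigma\in\outbound(\gm)$, so the supremum ranges over the same feasible set. The linear term is invariant because the same permutation acts on both factors, $\left\langle \sigma^{\naut},\para^{\naut}\right\rangle =\left\langle \sigma,\para\right\rangle $, and the entropy term is invariant by the first identity applied to $\sigma$. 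Collecting these gives $\Aapprox(\para^{\naut},\ea^{\naut})=\sup_{\sigma\in\outbound(\gm)}\left\langle \sigma,\para\right\rangle -\negEntApprox(\sigma,\ea)=\Aapprox(\para,\ea)$.

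The routine bookkeeping (the two re-indexings and the orthogonality of the coordinate permutation) is immediate. The only genuinely delicate step is verifying that $\outbound(\gm)$ is stabilized by $\naut$ together with checking that the coordinate relabeling of the pseudo-marginal vector is compatible with how $\naut$ acts on the node, edge, and arc configuration orbits; this is exactly the structural invariance guaranteed by $\naut$ being an automorphism, and it is where I would be most careful to match the indexing conventions of the lifted variables used elsewhere in the paper.
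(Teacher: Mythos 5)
Your proof is correct and takes essentially the same route as the paper's: re-index the node and edge sums of $B^{*}$ along the vertex and edge bijections induced by $\naut$, then perform the change of variables $\tau\mapsto\tau^{\naut}$ in the supremum using $\outbound^{\naut}=\outbound$. One small caution on your side remark: mutual information is \emph{not} invariant under arbitrary relabelings of the joint state space (only entropy has the pure multiset property), but it is invariant under the relabelings actually induced here---endpoint swaps and per-node state permutations---so your argument stands as written.
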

\begin{proof}
The intuition is that since the entropy bound $\negEntApprox$ is
defined on the graph structure of the graphical model $\gm$, it inherits
the symmetry of $\gm$. This can be verified by viewing the graph
automorphism$\pi$ as a bijection from nodes to nodes and edges to
edges, and so $\negEntApprox(\tau^{\pi},\ea^{\pi})$ simply rearranges
the summation inside $\negEntApprox(\tau,\ea)$. 
\begin{eqnarray*}
B^{*}(\tau,\ea) & = & -\sum_{v\in V(\gm)}H(\tau_{v})+\sum_{e\in E(\gm)}I(\tau_{e})\ea_{e}\\
 & = & -\sum_{v\in V(\gm)}H(\tau_{\naut(v)})+\sum_{e\in E(\gm)}I(\tau_{\naut(e)})\ea_{\naut(e)}\\
 & = & B^{*}(\tau^{\naut},\ea^{\naut})
\end{eqnarray*}

We now show same symmetry applies to the log-partition upper bound
$\Aapprox$. Let $\outbound(\gm)$ be an outer bound of the marginal
polytope $\Mean(\gm)$ such that $\Mean(\gm)\subset\outbound(\gm)\subset\Local(\gm)$.
Note that $\naut$ acts on and stabilizes $\outbound$, i.e., $\outbound^{\naut}=\outbound$.
Thus
\begin{eqnarray*}
B(\para,\ea) & = & \sup_{\tau\in\outbound}\left\langle \para,\tau\right\rangle -B^{*}(\tau,\ea)\\
 & = & \sup_{\tau^{\naut}\in\outbound}\left\langle \para,\tau\right\rangle -B^{*}(\tau,\ea)\\
 & = & \sup_{\tau\in\outbound}\left\langle \para,\tau^{\naut^{-1}}\right\rangle -B^{*}(\tau^{\naut^{-1}},\ea)\\
 & = & \sup_{\tau\in\outbound}\left\langle \para^{\naut},\tau\right\rangle -B^{*}(\tau,\ea^{\naut})\\
 & = & B(\para^{\naut},\ea^{\naut})
\end{eqnarray*}

\end{proof}
\textbf{Proof of Theorem \ref{thm:lifted-trw-equivalent}.}
\begin{proof}
We will show that the condition of theorem \ref{thm:equivalent} holds.
Let us fix a $\ea\in\symsub{\stp}{\varphi^{E}}$. Then $\ea^{\naut}=\ea$
for all $(\naut,\paut)\in\Aut_{\parti}.$ Thus $\negEntApprox(\tau^{\naut},\ea)=\negEntApprox(\tau^{\naut},\ea^{\naut})$.
On the other hand, by Lemma \ref{lem:symmetry-of-B}, $\negEntApprox(\tau^{\naut},\ea^{\naut})=\negEntApprox(\tau,\rho)$.
Thus $\negEntApprox(\tau^{\naut},\ea)=\negEntApprox(\tau,\rho)$.
Note that in case of the overcomplete representation, the action of
the group $\Aut_{\parti}$ is the permuting action of $\naut$; thus,
the TRW bound $\negEntApprox(\tau,\ea)$ (for fixed $\ea\in\symsub{\stp}{\varphi^{E}}$)
is stabilized by the lifting group $\Aut_{\parti}$.
\end{proof}
\textbf{Conditional Gradient (Frank-Wolfe) Algorithm for Lifted TRW}

The pseudo-code is given in Algorithm \ref{alg:ltrw}. Step 2 essentially
solves a lifted MAP problem which we used the same algorithms presented
in \cite{bui13uai} with Gurobi as the main linear programming engine.
Step 3 solves a 1-D constrained convex problem via the golden search
algorithm to find the optimal step size.

\begin{algorithm*}
\begin{algorithmic}[1] 	
\State $k=0$; $\bar{\tau}^{(0)} \gets$ uniform
\State Direction finding via lifted MAP
\[s_k = \arg\max_{s_k\in\overline{\mathcal{O}}} \langle s_k, \bar{\para}-\nabla_{\bar{\tau}} \mathrm{\overline{B^{*}}}(\bar{\tau}_k,\bar{\ea})\rangle \]
\State Step size finding via golden section search
\[\lambda_k = \arg\max_{\lambda\in[0,1]} \lambda \langle s_k-\bar{\tau}_k,\bar{\para}\rangle - \overline{B^{*}}(\bar{\tau}_k(1-\lambda)+s_k\lambda)\]
\State Update $\bar{\tau}_{k+1} =\bar{\tau}_k(1-\lambda_k)+s_k\lambda_k$
\State $k\gets k+1$
\State if not converged go to 2
\end{algorithmic} 

\caption{\label{alg:ltrw}Conditional gradient for optimizing lifted TRW problem}

\end{algorithm*}

\textbf{Proof of Lemma \ref{lem:lifted-mst}}.
\begin{proof}
After considering all edges in $\mathbf{e}_{1}\cup\ldots\cup\eorb_{i}$,
Kruskal's algorithm must form a \emph{spanning} forest of $\mathcal{G}_{i}$
(the forest is spanning since if there remains an edge that can be
used without forming a cycle, Kruskal's algorithm must have used it
already). Since the forest is spanning, the number of edges used by
Kruskal's algorithm at this point is precisely $|V(\mathcal{G}_{i})|-|C(\mathcal{G}_{i})|$.
Similarly, just after considering all edges in $\mathbf{e}_{1}\cup\ldots\cup\eorb_{i-1}$,
the number of edges used is $|V(\mathcal{G}_{i-1})|-|C(\mathcal{G}_{i-1})|$.
Therefore, the number of $\mathbf{\eorb}_{i}$-edges used must be
\[
|V(\mathcal{G}_{i})|-|C(\mathcal{G}_{i})|-[|V(\mathcal{G}_{i-1})|-|C(\mathcal{G}_{i-1})|]=\delta_{V}^{(i)}-\delta_{C}^{(i)}
\]
which is the difference between the number of new nodes (which must
be non-negative) and the number of new connected components (which
could be negative) induced by considering edges in $\eorb_{i}$. Any
MST solution $\ea$ can be turned into a solution $\bar{\rho}$ of
(\ref{eq:spanning-tree-lp-sym}) by letting $\bar{\rho}(\eorb)=\frac{1}{|\eorb|}\sum_{e'\in\eorb}\rho(e')$
. Thus, we obtain a solution $\bar{\ea}(\eorb_{i})=\frac{\delta_{V}^{(i)}-\delta_{C}^{(i)}}{|\eorb_{i}|}$.
\end{proof}
\textbf{Proof of Lemma \ref{lem:counting-connected-component}}. 

We first need an intermediate result.

\global\long\def\bv{\mathbf{v}}

\global\long\def\bz{\mathbf{z}}

\begin{lem}
If $\mathbf{u}$ and $\mathbf{v}$ are two distinct node orbits, and
$ $$\mathbf{u}$ and $\mathbf{v}$ are reachable in the lifted graph
$\bar{\mathcal{G}}$, then for any $u\in\mathbf{u}$, there is some
$v\in\mathbf{v}$ such that $v$ is reachable from $u$.\end{lem}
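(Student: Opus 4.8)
The plan is to turn a path in the lifted graph $\bar{\mathcal{G}}$ into an honest walk in the ground graph $\mathcal{G}$, built one edge at a time, by exploiting the fact that the automorphism group acts transitively on each node orbit. The whole argument rests on a single-step lifting property together with a routine induction on path length.

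First I would isolate the key single-step property: if an edge orbit $\mathbf{e}$ joins node orbits $\mathbf{a}$ and $\mathbf{b}$ in $\bar{\mathcal{G}}$, then for \emph{every} ground node $a\in\mathbf{a}$ there is an edge of $\mathbf{e}$ incident to $a$ whose other endpoint lies in $\mathbf{b}$. To see this, pick any representative edge $\{a_{0},b_{0}\}\in\mathbf{e}$ with $a_{0}\in\mathbf{a}$ and $b_{0}\in\mathbf{b}$. Since the automorphism group acts transitively on the orbit $\mathbf{a}$, there is an automorphism $g$ with $g(a_{0})=a$. Then $g(\{a_{0},b_{0}\})=\{a,g(b_{0})\}$ is again an edge in the orbit $\mathbf{e}$ (orbits are closed under the group action), and $g(b_{0})\in\mathbf{b}$ because automorphisms permute the node orbits among themselves. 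Hence $a$ has a neighbor in $\mathbf{b}$ along an edge of $\mathbf{e}$. This argument is unaffected by whether $\mathbf{a}=\mathbf{b}$, i.e.\ it also covers the case where $\mathbf{e}$ is a self-loop in $\bar{\mathcal{G}}$.

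Next I would induct on the length $\ell$ of a path in $\bar{\mathcal{G}}$ witnessing the reachability of $\mathbf{u}$ and $\mathbf{v}$, proving the slightly more general claim that also allows $\mathbf{u}=\mathbf{v}$ (so that the induction closes cleanly). The base case $\ell=0$ forces $\mathbf{u}=\mathbf{v}$, and we simply take $v=u$. For the inductive step, write the lifted path as $\mathbf{u}=\mathbf{w}_{0},\mathbf{w}_{1},\ldots,\mathbf{w}_{\ell}=\mathbf{v}$. Applying the single-step property to the first edge orbit (joining $\mathbf{w}_{0}$ and $\mathbf{w}_{1}$) at the given node $u\in\mathbf{w}_{0}$ produces a ground node $w_{1}\in\mathbf{w}_{1}$ adjacent to $u$ in $\mathcal{G}$. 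The tail $\mathbf{w}_{1},\ldots,\mathbf{w}_{\ell}$ is a lifted path of length $\ell-1$, so the induction hypothesis applied at $w_{1}$ yields a node $v\in\mathbf{v}$ reachable from $w_{1}$ in $\mathcal{G}$; concatenating the edge $\{u,w_{1}\}$ with that ground walk exhibits $v$ as reachable from $u$.

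The only real content is the single-step property, which is where transitivity of the group action on orbits does all the work; the remainder is a straightforward induction. The one point to state carefully is that automorphisms carry each node orbit into itself, so that $g(b_{0})$ is guaranteed to remain in $\mathbf{b}$, and that the construction degenerates gracefully when the lifted edge is a self-loop.
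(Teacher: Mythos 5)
Your proof is correct and is essentially the paper's own argument: both proceed by induction on the length of the lifted path, with the key step being exactly your single-step property --- apply an automorphism carrying a representative endpoint of an edge orbit onto the given ground node, so that the image of the representative edge extends the ground walk into the next orbit. The only differences are cosmetic (you peel the first edge and recurse on the tail with the base case $\ell=0$ allowing $\mathbf{u}=\mathbf{v}$, whereas the paper extends at the last edge; you also make the self-loop case and the orbit-preservation of automorphisms explicit), so nothing of substance changes.
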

\begin{proof}
Induction on the length of the $\mathbf{u}$-$\mathbf{v}$ path. Base
case: if $\bu$ and $ $$\bv$ are adjacent, there exists an edge
orbit $\eorb$ incident to both $\bu$ and $\bv$. Therefore, the
exists a ground edge $\{u_{0},v_{0}\}$ in $\eorb$ such that $u_{0}\in\bu$
and $v_{0}\in\bv$. The automorphism mapping $u_{0}\mapsto u$ will
map $v_{0}\mapsto v$ for some node $v$. Clearly $\{u,v\}$ is an
edge, and $v\in\bv$. Main case: assume the statement is true for
all pair of orbits with path length $\le n$. Suppose $\bu-\bv$ is
a path of length $n+1$. Take the orbit $\bz$ right in front of $\bv$
in the path, so that $\bu-\bz$ is a path of length $n$, and $\bz$
and $\bv$ are adjacent. By the inductive assumption, there exists
$z\in\bz$ such that $u$ is connected to $z$. Applying the same
argument of the base case, there exists $v\in\bv$ such that $\{z,v\}$
is an edge. Thus $u$ is connected to $v$.
\end{proof}
We now return to the main proof of lemma \ref{lem:counting-connected-component}.
\begin{proof}
If the ground graph $\mathcal{G}$ has only one component then this
is trivially true. Let $\mathcal{G}_{1}$ and $\mathcal{G}_{2}$ be
two distinct connected components of $\mathcal{G}$, let $u_{1}$
be a node in $\mathcal{G}_{1}$ and $\mathbf{u}$ be the orbit containing
$u_{1}$. Let $v_{2}$ be any node in $\mathcal{G}_{2}$. Since the
lifted graph $\bar{\gm}$ is connected, all orbits are reachable from
one another in $\bar{\gm}$. By the above lemma, there must be some
node $u_{2}\in\mathbf{u}$ reachable from $v_{2}$, hence $u_{2}\in\gm_{2}$
(if $v_{2}\in\mathbf{u}$ then we just take $u_{2}=v_{2}$). This
establishes that the node orbit $\bu$ intersects with both $\gm_{1}$
and $\gm_{2}$. Note that $u_{1}\neq u_{2}$ since otherwise $ $
$\mathcal{G}_{1}=\mathcal{G}_{2}$. Let $\pi$ be the automorphism
that takes $u_{1}$ to $u_{2}$. 

We now show that $\pi(\mathcal{G}_{1})=\mathcal{G}_{2}$. Since $\naut$
maps edges to edges and non-edges to non-edges, it is sufficient to
show that $\naut(V(\gm_{1}))=\naut(V(\gm_{2}))$. Let $z_{1}$ be
a node of $\gm_{1}$ and $z_{2}=\naut(z_{1})$. Since $\gm_{1}$ is
connected, there exists a path from $u_{1}$ to $z_{1}$. But $\naut$
must map this path to a path from $u_{2}$ to $z_{2}$, hence $z_{2}\in V(\gm_{2})$.
Thus $\naut(V(\gm_{1}))\subset V(\gm_{2})$. Now, let $z_{2}$ be
a node of $\gm_{2}$ and let $z_{1}=\naut^{-1}(z_{2})$. By a similar
argument, $\naut^{-1}$ must map the path from $u_{2}$ to $z_{2}$
to a path from $u_{1}$ to $z_{1}$, hence $z_{1}\in V(\gm_{1})$.
Thus $\naut(V(\gm_{2}))\subset V(\gm_{1})$, and we have indeed shown
that $\naut(V(\gm_{1}))=\naut(V(\gm_{2}))$.

Hence all connected components of $\gm$ are isomorphic. Given one
connected component $\gm^{'}$, the number of connected components
of $\gm$ is $|C(\mathcal{G})|=|V(\mathcal{G})|/|V(\mathcal{G}^{'})|$.
\end{proof}
\textbf{Lifted Kruskal's Algorithm}

See algorithm \ref{alg:lifted-kruskal}. The algorithm keeps track
of the list of connected components of the lifted graph $\bar{\gm}_{i}$
in a disjoint-set data structure similar to Kruskal's algorithm. Line
17 and line 20 follow from lemma \ref{lem:counting-connected-component}
and lemma \ref{lem:lifted-mst} respectively. The number of ground
nodes of a lifted graph is computed as the sum of the size of each
of its node orbit.

\begin{algorithm*}[t]
\caption{Lifted Kruskal's algorithm\\
	Find:~$\bar\ea$, a solution of (\ref{eq:spanning-tree-lp-sym}) at orbit level\\
	Input:~lifted graph $\bar\gm$ and its set of edge orbits
}
\label{alg:lifted-kruskal}
\begin{algorithmic}[1]
	\State sort edge orbits in decreasing weight order $(\eorb_1\ldots \eorb_k)$
	\State $C=\emptyset$ \Comment{set of connected comp. of lifted graph $\bar{\gm}_i$ as disjoint set}
	\State $GC=$ empty map \Comment{hashmap from elements of $C$ to their number of ground components}
	\State $numGC=0$ \Comment{number of ground connected comp of $\bar{\gm}_i$}
	\State $numGV=0$ \Comment{number of ground nodes of $\bar{\gm}_i$}
	\State $GV_{max}=\#GroundNode(\bar{\gm})$ \Comment{number of ground nodes of $\bar{\gm}$}
	\For{i=1,\dots,k}
		\State $C_{old} =$ find elements in $C$ containing the end nodes of $\eorb_i$
		\State $GC_{old} = \sum_{S\in C_{old}}GC(S)$
		\State $\delta_V = \sum_{{\bf v}:{\bf v}\in Nb({\bf e}_{i}),\ {\bf v}\not\in C_{old}}|{\bf v}|$
		\State $numGV=numGV+\delta_{V}$
		\State $H =$  union of members of $C_{old}$ and ${\bf e}_{i}$
		\State $C \gets$ remove members of $C_{old}$ and add $H$
		\State ${\bf u}=$ pick a node of $H$; $u=representative({\bf u})$
		\State $H_{fixed} = $ lifted graph of $H$ after fixing the node $u$
		\State $K=$ component contains $\{u\}$ in $H_{fixed}$
		\State $GC(H)\gets\frac{\#GroundNode(H)}{\#GroundNode(K)}$
		\State $\delta_{C}=GC(H)-GC_{old}$
		\State $numGC\gets numGC+\delta_{C}$
		\State $\bar{\ea}(\eorb_{i})=\frac{1}{|\eorb_{i}|}[\delta_{V}-\delta_{C}]$
		\If{ $numGV=GV_{max}$ and $numGC=1$}\Comment{no new ground nodes, 1 ground connected component}
			\State break \Comment{future $\delta_V$ and $\delta_C$ must be $0$}
		\EndIf
	\EndFor
	\State  $\bar{\ea}(\eorb_{j}) = 0$ for all $j=i+1\ldots k$
	\State \textbf{return} $\bar{\ea}$
\end{algorithmic}
\end{algorithm*}

\begin{figure*}
\includegraphics[scale=0.4]{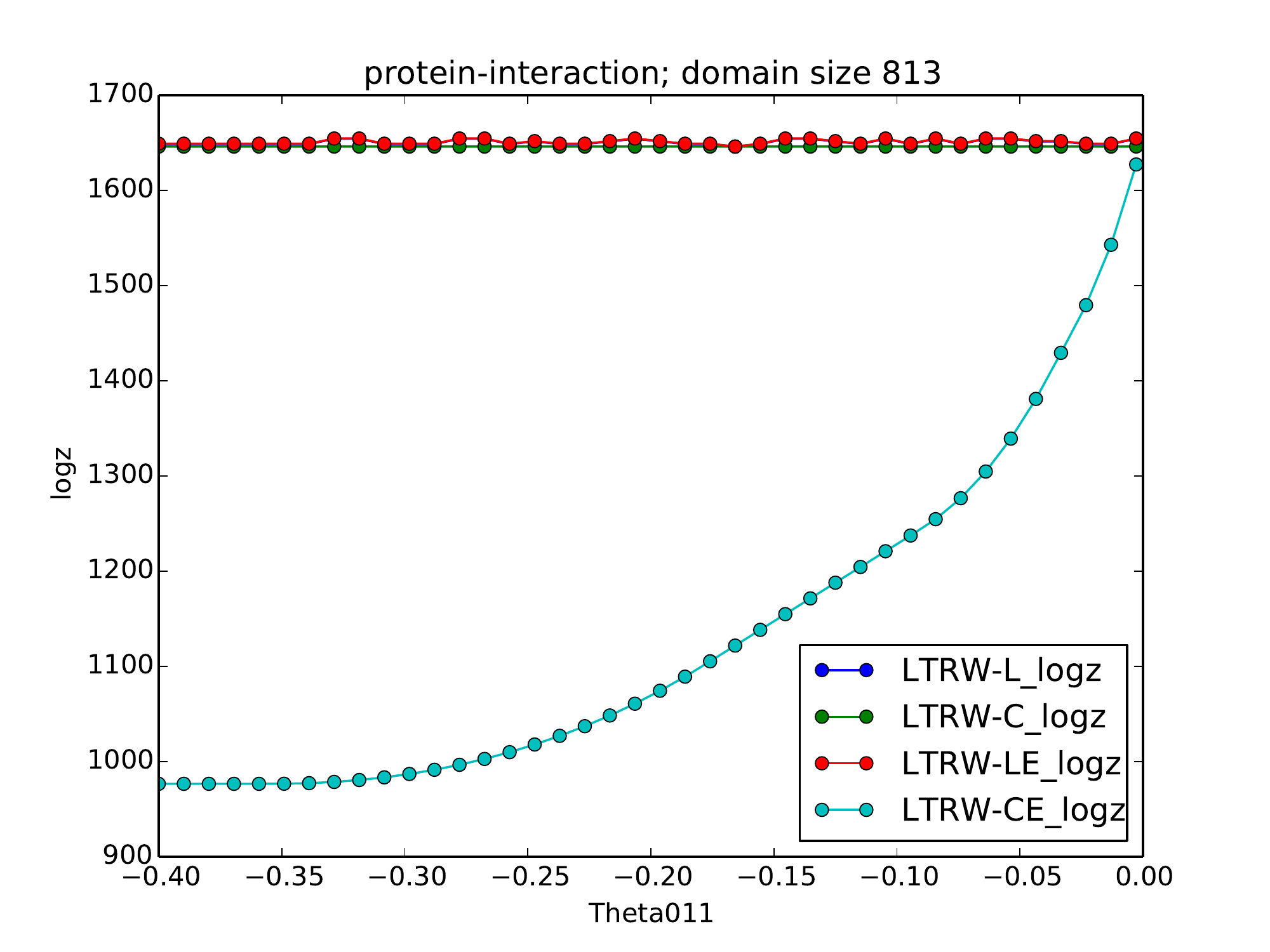}\includegraphics[scale=0.4]{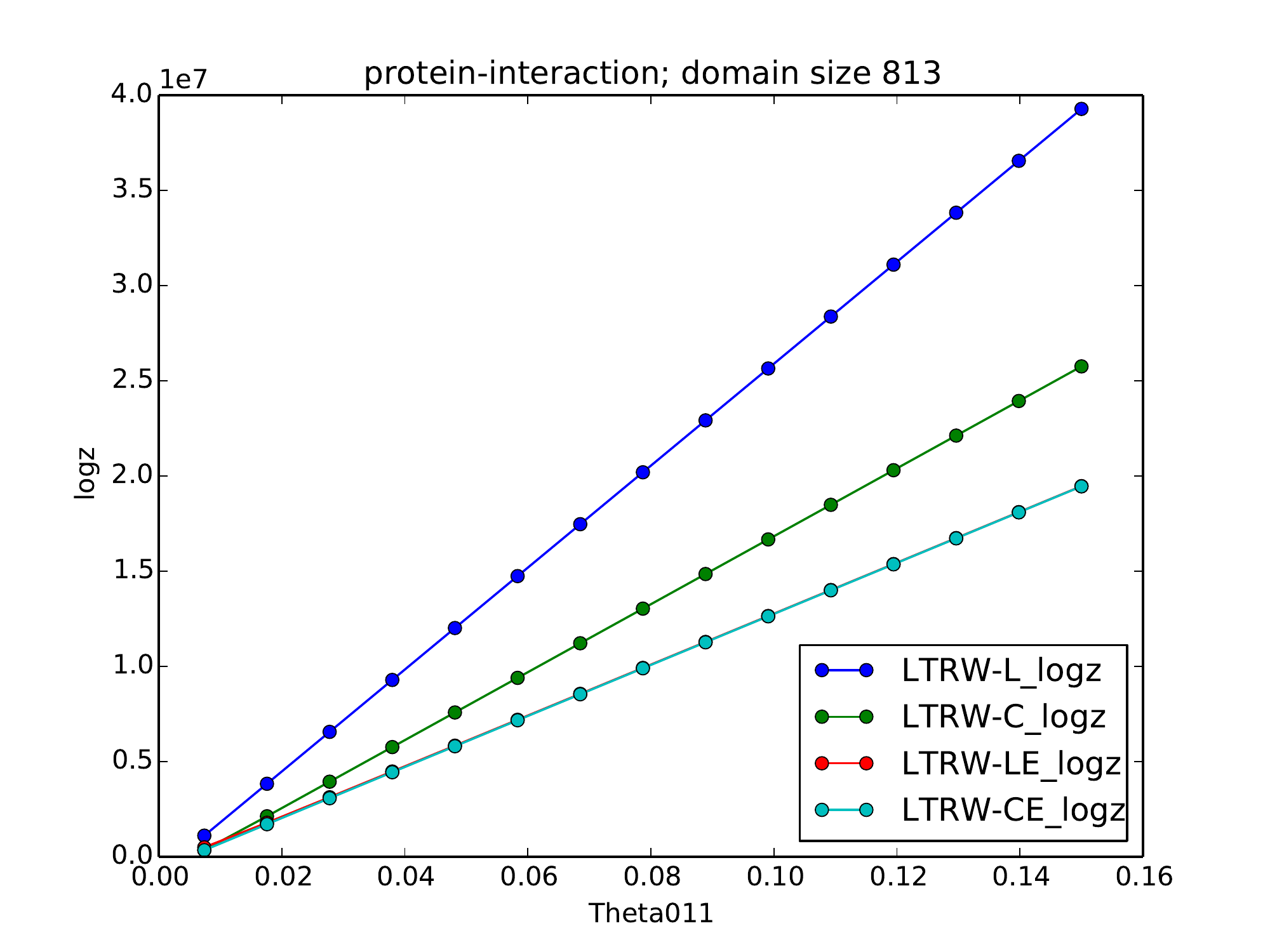}

\caption{\label{fig:ppi-logz}Log-partition bounds for PPI model}
\end{figure*}

\textbf{Proof of Theorem \ref{thm:exch-constraints}}. 

We give two different proofs of this theorem. The first proof demonstrates
how the constraints are derived methodically by lifting the ground
constraints of the exact marginal polytope. The second proof is more
intuitive and illustrates what each variable in the system of constraints
represents conceptually. 

Proof 1.
\begin{proof}
First consider the configuration $\config$ such that $\config(u_{1},u_{2})=(0,0).$
From Eq. (\ref{eq:ground-cluster}), after substituting the ground
variables $\tau_{\config}^{\exchvars}$ by the lifted variables $\bar{\tau}_{|\config|}^{\exchvars}$
where $|\config|$ denotes the number of $1$'s in the configuration,
we obtain the lifted constraint
\[
\exists\bar{\tau}_{0}^{\exchvars}\ldots\bar{\tau}_{n}^{\exchvars}:\,\sum_{\config\ s.t.\ \config(u_{1},u_{2})=(0,0)}\bar{\tau}_{|\config|}^{\exchvars}=\bar{\tau}_{\mathbf{e}(\exchvars):00}
\]
Let us now simplify the summation. Since $\config(u_{1},u_{2})=(0,0)$,
$|\config|$ can range from 0 to $n-2$. For each value of $|\config|=k$
there are $\left(\stackrel{n-2}{k}\right)$ different configurations.
As a result, we can compactly write the above lifted constraint as
\[
\sum_{k=0}^{n-2}\left(\stackrel{n-2}{k}\right)\bar{\tau}_{k}^{\exchvars}=\bar{\tau}_{\be(\exchvars):00}
\]
Note that every edge $\{u_{1},u_{2}\}$ results in exactly the same
constraint. 

Similarly, when $\config(u_{1},u_{2})=(1,1)$ we obtain the lifted
constraint
\[
\sum_{k=0}^{n-2}\left(\stackrel{n-2}{k}\right)\bar{\tau}_{k+2}^{\exchvars}=\bar{\tau}_{\be(\exchvars):11}
\]
and when $\config(u_{1},u_{2})=(0,1)$ we obtain 
\[
\sum_{k=0}^{n-2}\left(\stackrel{n-2}{k}\right)\bar{\tau}_{k+1}^{\exchvars}=\bar{\tau}_{\mathbf{a}(\exchvars):01}
\]
There's no need to consider the $(1,0)$ case since we can always
re-order the pair $(u_{1},u_{2})$.

Finally, let ${c_{k}^{\exchvars}= \choose nk\bar{\tau}_{k}^{\exchvars}}$,
we arrive at the set of constraints of theorem \ref{thm:exch-constraints}. 
\end{proof}
Proof 2.
\begin{proof}
Let $C(0,0)$ denote the number of $(0,0)$ edges in $\exchvars$.
An $(0,0)$ edge is an edge where the two end-nodes receive the assignment
$0$. To show the first equality holds, we use two different ways
to compute the expectation of $C(0,0)$. 

First, $\Expt\ [C(0,0)]$ is the sum of the probability that an edge
is $(0,0)$, summing over all the edges. Due to exchangeability, all
probabilities are the same and is equal to $\bar{\tau}_{\pair{\eorb(\exchvars)}{00}}$,
so 
\[
\Expt[C(0,0)]=\frac{n(n-1)}{2}\bar{\tau}_{\pair{\eorb(\exchvars)}{00}}
\]
Second, $C(0,0)$ conditioned on the event that there are precisely
$k$ $1$'s in $\exchvars$ is $\frac{(n-k)(n-k-1)}{2}$ for $k\le n-2$
and $0$ if $k>n-2$. Now let $c_{k}^{\exchvars}$ be the probability
of this event, then 
\begin{eqnarray*}
\Expt[C(0,0)] & = & \Expt[\Expt[C(0,0)\vert\sum_{i\in\exchvars}x_{i}=k]]\\
 & = & \sum_{k=0}^{n-2}\frac{(n-k)(n-k-1)}{2}c_{k}^{\exchvars}
\end{eqnarray*}

This shows the first equality holds. The second equality is obtained
similarly by considering the expectation of $C(1,1)$, the number
of $(1,1)$ edges in $\exchvars$. The last equality is obtained by
considering the expectation of $C(0,1)$, the number of $(0,1)$ arcs.
\end{proof}
\textbf{Protein-Protein Interaction (PPI) Model}

The PPI model we considered is exactly the same as an exponential
family random graph model with 3 graph statistics: edge count, 2-chain
(triangle with a missing edge) count, and triangle count. The model
specification in MLN (with an additional clause enforcing undirectedness
of the random graph) is 
\begin{eqnarray*}
\frac{1}{2}\theta_{11} &  & r(x,y)\\
\frac{1}{2}\theta_{110} &  & r(x,y)\wedge r(x,z)\wedge!r(y,z)\\
\frac{1}{6}\theta_{111} &  & r(x,y)\wedge r(y,z)\wedge r(x,z)\\
-\infty &  & !(r(x,y)\leftrightarrow r(y,x))
\end{eqnarray*}

For this model, the edge appearance $\bar{\ea}$ is initialized so
that the edges correspond to the hard clause are always selected.
The set $\{r(x,y)\ |\ x\ \text{fixed,\ }y\ \text{varies}\}$ is an
exchangeable cluster, thus exchangeable constraints can be enforced
on the orbit corresponding to this cluster. The log-partition upper
bounds of lifted TRW with different outer bounds are shown in Fig.
\ref{fig:ppi-logz}. The left part shows the region with negative
$\theta_{011}$ and the right part shows the region with positive
$\theta_{011}$ ($\theta_{111}$ is held fixed at $-0.021$). For
negative $\theta_{011}$, a combination of cycle and exchangeable
constraints (CE) is crucial to improve the upper bound. For positive
$\theta_{011}$, exchange constraints (LE) are already sufficient
to yield the best bound.

\textbf{Equitable Partition and the Spanning Tree Polytope}

\begin{figure}
\centering{}\includegraphics[scale=0.4]{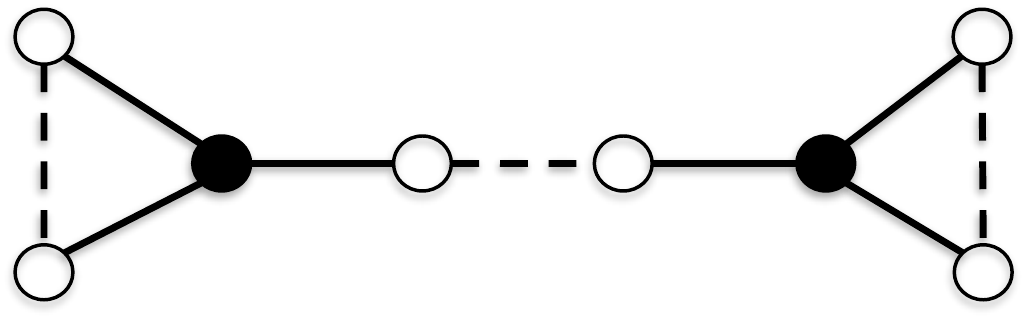}\caption{\label{fig:equitable-partition}$\gm$ and its equitable partition.
No element of the spanning tree polytope is uniform in the same cell
of the equitable partition.}
\end{figure}

\begin{example}
\label{example:equitable-stp}We give an example where the symmetrized
spanning tree polytope corresponding to the (edge) equitable partition
$\varepsilon$, $\symsub{\stp}{\epsilon}=\stp(\gm)\cap\Real_{[\varepsilon]}^{|E|}$
is an empty set. The example graph $\gm$ is shown in Fig. \ref{fig:equitable-partition}.
There are two edge types in the coarsest equitable partition: solid
and dashed. The dashed edge in the middle is a bridge, it must appear
in every spanning tree, so every $\ea\in\stp$ must assign $1$ to
this edge. If $\symsub{\stp}{\epsilon}\neq\emptyset$ then there is
some $\rho\in\symsub{\stp}{\epsilon}$ that assigns the same weights
to all dashed edges. Therefore it will assign 1 to the two dashed
edges on the left and right hand side. The remaining solid edges have
equal weight, and since the total weight is $|V|-1=7$, the solid
weight is $(7-3)/6=2/3$. Now consider the triangle on the left. This
triangle has the total weight $1+4/3>2$ which violates the constraint
of the spanning tree polytope. Thus, for this graph $\symsub{\stp}{\epsilon}$
must be empty.\end{example}

\end{document}